\documentclass{article}
\usepackage{graphicx} 
\usepackage{amsmath, amssymb}
\usepackage[utf8]{inputenc}
\usepackage[T1]{fontenc}
\usepackage[margin=1.35in]{geometry}  
\usepackage[authoryear]{natbib}
\usepackage[colorlinks=true, linkcolor=blue, citecolor=blue, urlcolor=blue]{hyperref}
\hypersetup{
  colorlinks=true,
  citecolor=blue,
  linkcolor=blue,
  urlcolor=blue
}
\usepackage{algorithm}
\usepackage{booktabs}
\usepackage{algpseudocode}
\usepackage{microtype}
\usepackage{subcaption}
\usepackage{xcolor}
\usepackage{enumitem}
\usepackage{graphicx}
\usepackage{enumitem}
\usepackage{amsmath, amsfonts, amssymb, amsthm}

\definecolor{googleblue}{HTML}{4285f4}
\definecolor{royalblue}{HTML}{4169e1}
\definecolor{greyblue}{HTML}{6699CC}
\theoremstyle{plain}
\newtheorem{theorem}{Theorem}[section]

\newtheorem{lemma}[theorem]{Lemma}

\theoremstyle{definition}

\theoremstyle{remark}

\title{Schreier Coset Graph Rewiring}

\author{
  Aryan Mishra\thanks{amishr17@umd.edu, randym@umd.edu}
  \and Randy Martinez\footnotemark[1]
  \and Lizhen Lin\thanks{\texttt{lizhen01@umd.edu}}
}
\date{}

\begin{document}

\maketitle

\begin{abstract}
   The information flow in the graph neural networks (GNNs) is fundamentally constrained by \textit{over-squashing}, where structural bottlenecks impede long range information propagation. Graph-rewiring methods, which modify graph topology,  have been extensively used to alleviate this. However, existing approaches often introduce prohibitive structural and computational bottlenecks, fail to  preserve the critical properties of original graphs, and increase the edge counts massively. We introduce a novel method \textit{\textbf{Schreier-Coset Graph Rewiring}}, a group-theoretic rewiring method that augments the input graph with a Schreier-Coset graph derived from a special linear group. Our method provides theoretical  guarantees, a graph that exhibits spectral gap and a bounded effective resistance, creating a low-resistance bypass for long-range communication. Empirical evaluations demonstrate that SCGR reduces effective resistance by $5$–$40\%$ across various learning tasks, effectively mitigating connectivity bottlenecks  while maintaining competitive accuracy.
\end{abstract}

\section{Introduction and Related Works}
Graphs are network structures that represent relationships by connecting entities (nodes) with the other entities using connection links (edges) \cite{harary1969graph, diestel2012graph, shuman2013emerging, hamilton2017inductive}. 
The ability of graphs to capture relational and structural information makes them invaluable across diverse fields such as biology, topology, recommendation systems and connected structural information \cite{battaglia2018relational, sanchez2018graph, gilmer2017neural, berg2017graph, koh2024physicochemical}.

 Graph Neural Networks (GNNs) are a specialized class of neural networks developed to process graph structural data. 
GNNs typically operate in a message passing paradigm \cite{he2023message},  where nodes iteratively exchange and aggregate information from their neighbors to update the node representations \cite{jiang2019semi, kipf2016semi, velivckovic2017graph}.  To capture long-range information or interactions within graphs, deep GNN architectures are often necessary.  However, increasing the number of layers often introduces structural and computational bottlenecks. In particular,  this results in  large amount of information from extensive neighborhoods being compressed into fixed size embeddings, known as \textbf{\textit{over-squashing}} \cite{alon2020bottleneck, arnaiz2025oversmoothing}. Over-squashing curbs the ability of GNNs to capture long range dependencies, thus degrading performance tasks that need global context. 

Various methods have been employed to address over-squashing in GNNs. In \textbf{\textit{Feature Augmentation}}, node/edge attributes are enriched with global signals. \citet{eliasof2023graph} concatenate top-\(k\) Laplacian eigenvectors to each node, thus long-range information does not propagate hop by hop. However, eigen-decomposition, which costs \(O(n^3)\) and \(O(nk)\) memory, suffers from batch inefficiency. \textbf{\textit{Graph Rewiring}} modifies the input (original) graph by \emph{\textbf{a) adding nodes}} or  \emph{\textbf{b) reconfiguring the edges}} to generate an output graph that enhances connectivity between nodes.
For instance, \citet{deac2022expander} construct expander graphs.  \citet{wilson2024cayley} use Cayley graphs to aid propagation. Some other work modify topology using properties such as curvature \cite{fesser2024mitigating}, spectral expansion \cite{karhadkar2023fosr} and effective resistance \cite{black2023understanding} to optimize information flow. 

The existing graph rewiring techniques have two limitations: \textbf{\textit{a) over-alter the edges of input graph}} and \textbf{\textit{b) introduce newer edges}}.  Expander graph rewiring methods introduce entirely new nodes sets, diverging significantly from the structure of input graph \cite{rampášek2023recipegeneralpowerfulscalable}.  Graph transformers rely on FCNNs (fully connected neural networks) and have quadratic- computation scaling. The Delaunay graphs \citep{attali2024delaunay} provide benefits such as reduced graph diameter and lower effective resistance. However, it constructs graphs solely based on node features, completely disregarding the original graph topology. 

\begin{figure*}[t]
  \centering
  \hspace*{3pt}%
\includegraphics[width=\textwidth,height=0.5\textheight,keepaspectratio]{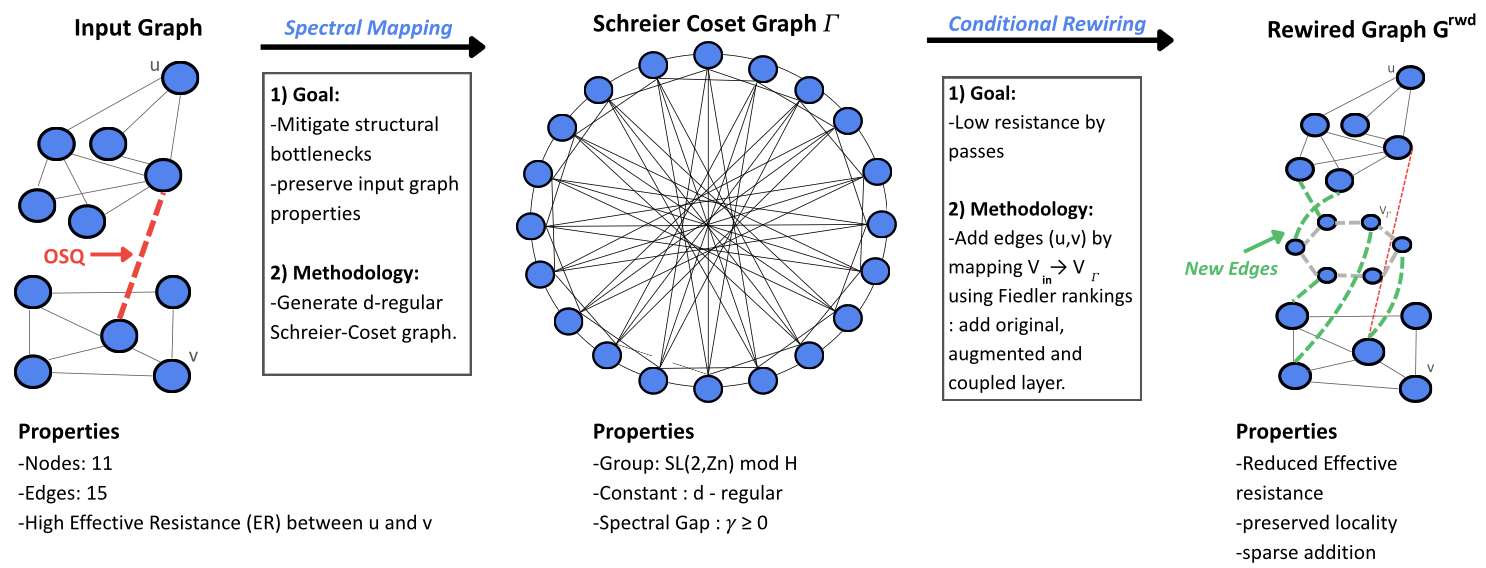}
  \caption{Illustrates the Schreier–coset graph rewiring framework, original graph locality is preserved,\(d\)-regular graph \(\Gamma\) provides constant-degree shortcuts with bounded effective resistance, align nodes via Fiedler Ranking
  (\textcolor{red}{OSQ} represents \textit{over-squashed} edge link)}
  \label{fig:Scr}
\end{figure*}

Preserving the graph locality is imperative during rewiring, since many learning tasks, such as clustering and semi-supervised learning on graphs, rely on spectral properties of input graphs to ensure accurate results. These methods often introduce many additional edges to enhance connectivity, which both increases computational cost of learning on the rewired output graph \cite{arnaiz2025oversmoothing} and the risk of over-smoothing \cite{li2018deeper,liang2023predicting,liang2023tackling}.
\\
To address over-squashing, we propose \textbf{\textit{Schreier-Coset Graph Rewiring (SCGR)}} (refer \textbf{Figure \ref{fig:Scr}} for a schematic flow of our method), a group theoretic approach based on the \emph{Schreier-Coset Graph} constructed from the cosets of the special linear group \(SL(2,\mathbb{Z}_n)\).  

\paragraph{Contributions.} Our main contributions are summarized as follows:

\begin{itemize}
    \item \textbf{Schreier-Coset (SCGR) formalization :} We define the novel construction of the Schreier-Coset graphs and their application in rewiring augmentations in GNNs. The vertices corresponding to cosets of the  \(SL(2,\mathbb{Z}_n)\) modulo an upper-triangle subgroup, with constant generators yielding a \(d\)-regular graph.
    \item \textbf{Theoretical Analysis :} We provide strong theoretical  guarantees of SCGR  by defining and deriving the node-coset mappings, spectral properties, effective resistance bounds on the augmented graph, and over-squashing mitigation guarantees. 
    \item \textbf{Empirical Evaluation :} We evaluate \textbf{\textit{SCGR}} on node and graph classification benchmarks,  and test accuracy  and ER bounds by varying modularity of SBM (Stochastic Block Model) graphs.  Our extensive numerical  results demonstrate that \textbf{\textit{SCGR}} consistently matches or attains higher scores against rewiring baselines. 
\end{itemize}

\subsection{Related Works and limitations} 
Most approaches, to alleviate over-squashing,  modify the input graph by rewiring it. Expander graph-based rewiring \cite{deac2022expander}, maintains a small diameter using Cayley graphs, which can have a different node set representation than the input graph. \citet{black2023understanding} minimizes effective resistance to reduce bottlenecks. \citet{attali2024delaunay} constructs newer graphs based on node features, disregarding the original spectral properties.  FoSR \cite{karhadkar2023fosr}, improves the first order approximation of the spectral gap by adding edges.  Curvature-based methods have been implemented to enhance connectivity by adding and removing edges based on geometric principles \cite{topping2021understanding, nguyen2023revisiting}. ProxyGap \cite{jamadandi2024spectral} based on \cite{braess1968paradoxon} modifies the edges. \citet{qian2023probabilistically} explores probabilistic approaches. PANDA \citep{choi2024panda} proposes alternative message passing mechanisms based on width. \citet{finkelshtein2024cooperative} introduces a learned cooperative mechanism in message passing paradigm. These methods do improve connectivity but aggressively alter the topology, neglecting the original graph spectral properties.

\section{Schreier-Coset Rewiring for GNNs}
\subsection{Graph Preliminaries}
\textbf{Graph} Let \(G=(V,E)\) denote an undirected, connected and non-bipartite graph with node set \(V\) and edge set \(E\) and its adjacency matrix \(A \in \mathbb{R}^{n_{in} \times n_{in}}\) with entries \(A_{ij} = 1\) if \((i,j) \in E\) and \(0\) otherwise, where \(|V| = n_{in}\). The diagonal degree matrix \(D = \mathrm{diag}(d_1, \dots, d_n)\) with \(D_{vv} = d_v\). The normalized Laplacian is \(L = D^{-1/2}(D-A)D^{-1/2}\). The eigenvalues of \(L\) satisfy \(0 = \lambda_0 \leq \lambda_1 \leq \cdots \leq \lambda_{n_{in}-1}\). The eigenvector associated with \(\lambda_1\) provides canonical one-dimensional embeddings of the nodes that reflect graph connectivity.

\textbf{Special Linear Group} \(SL(2,\mathbb{Z}_n)\). Let \(\mathbb{Z}_n = \mathbb{Z}/n\mathbb{Z}\) denote the ring of integers modulo \(n\), The group \(\mathcal{G} = SL(2,\mathbb{Z}_n)\) is defined as : \[\mathcal{G} = SL(2, \mathbb{Z}_n) = \left\{ M \in \mathbb{Z}_n^{2 \times 2} \mid \det(M) \equiv 1  \pmod n \right\} ,\] where \(n\) depends on the input graph size.

\textit{Note that \(n\) is the not the same as the input graph size \(n_{in}\). We choose \(n\) as the smallest prime achieving sufficient Schreier-Coset coverage.}

\textbf{Subgroup.} Let \(H\subset SL(2,\mathbb{Z}_n)\) be a subgroup consisting of diagonal matrices with unit determinant within \(\mathcal{G}\): \[H = \left\{ \begin{pmatrix} a & 0 \\ 0 & d \end{pmatrix} \in \mathcal{G} \mid ad \equiv 1\pmod{n} \right\}.\]

\textbf{Generator.} Let \(\mathbb{S}\) be the generator set: \[\mathbb{S} = \left\{ \begin{pmatrix} 1 & \pm 1 \\ 0 & 1 \end{pmatrix}, \begin{pmatrix} 1 & 0 \\ \pm 1 & 1 \end{pmatrix} \right\} \bmod n.\]

\textbf{Expander Graph.} An expander graph is sparse yet highly connected. We use pre-computed expander graphs based on Cayley graphs \(\mathrm{Cay}(\mathcal{G}; \mathbb{S})\) derived from special linear group \(SL(2,\mathbb{Z}_n)\) with generating set \(\mathbb{S}\). While these graphs have good expansion properties, achieving large node counts is often impractical since \(|V( \mathrm{Cay}(\mathcal{G}; \mathbb{S}))| = n^3 \prod_{\text{prime} \;p|n}\left(1-\frac{1}{p^2}\right)\), which creates excessive memory requirements for large \(n\). (\(n\) satisfies the \(|V( \mathrm{Cay}(\mathcal{G}; \mathbb{S}))|\)). 
\section{Schreier-Coset Graph \(\Gamma\)}
Schreier-Coset graphs provide a permutation representation of finitely generated groups on the cosets of a subgroup of \(SL(2,\mathbb{Z}_n)\). The SC-graph plays a crucial role in our method serving as an auxiliary structure that encodes robust expansion and mixing behavior through group-theoretic symmetries. Formally, for a group \(\mathcal{G}\), a subgroup \(H \subseteq \mathcal{G}\), a generating set \(\mathbb{S} \subseteq \mathcal{G}\), the Schreier-coset graph \(\Gamma = (V_\Gamma, E_\Gamma)\) is defined as:
\begin{itemize}
    \item \textbf{Vertex set:} $V_{\Gamma} = \left \{ gH: g \in \mathcal{G} \right\}$  (collection of left cosets). 
    \item \textbf{Edge set:} For each \(gH \in V_{\Gamma}\) and each \(s \in \mathbb{S}\) includes as undirected edge \(gH,(sg)H\) in \(E_\Gamma\).
\end{itemize}
This yields a \(d\)-regular graph with \(d = |\mathbb{S}|\), since each coset has one neighbor for every generator. In constructing the the Schreier graph,  we employ a \emph{canonical construction}. That is, \(\Gamma\) is constructed over the group \(\mathcal{G} = SL(2,\mathbb Z_n)\) with subgroup \(H\) consisting of diagonal matrices, and use elementary row operations as generators: \(\mathbb{S} = \left\{ \begin{pmatrix} 1 & \pm1 \\ 0 & 1 \end{pmatrix}, \begin{pmatrix} 1 & 0 \\ \pm1 & 1 \end{pmatrix} \right\} \mod n \). The resulting Schreier-coset graph has \(|V_\Gamma| = SL(2,\mathbb Z_n)/ |H| = n(n^2-1) / \phi(n)\) vertices, where \(\phi(n)\) is Euler's totient function.
\subsection{Schreier-Guided Graph Rewiring}
We augment the input graph as seen in \textbf{Algorithm~\ref{alg:scgr}}. \(G_{in} = (V_{in},E_{in})\) using structure preserving rewiring guided by the Schreier-Coset graph \(\Gamma\). Crucial to this method is a locality-preserving map \(\phi: V_{in} \to V_{\Gamma}\). 

\paragraph{Spectral Mapping Construction.}
Let \(L_{in}, L_\Gamma\) be normalized Laplacian with eigenvectors \(\psi_i, \phi_{in}\). We define \(\Phi_{\mathrm{in}}(v)=(\psi_2(v),\ldots,\psi_{r+1}(v))\) and \(\Phi_\Gamma(x)=(\varphi_2(x),\ldots,\varphi_{r+1}(x))\in\mathbb{R}^r\)

We choose \(\phi\) by:
\(
\min_{\phi}\ \sum_{(u,v)\in E_{\mathrm{in}}}\mathrm{dist}_\Gamma(\phi(u),\phi(v))
\quad \)
Such that
\(\quad
\|\Phi_\Gamma(\phi(v))-\Phi_{\mathrm{in}}(v)\|_2\le \varepsilon\ \ \forall v.
\)
If \(V_{\mathrm{in} |>|V_\Gamma|}\) use \(q=\lceil |V_{\mathrm{in}}|/|V_\Gamma|\rceil\) disjoint copies of \(\Gamma\) or \(\Gamma \times K_q\) and apply the above per copy. 

\begin{algorithm}[t]
\caption{Schreier--Coset Transform: Graph Rewiring Method (Section 3.1)}
\label{alg:scgr}

\begin{algorithmic}[0]
\Require \textbf{Input graph} $G_{\mathrm{in}}=(V_{\mathrm{in}},E_{\mathrm{in}})$ with features $X_{\mathrm{in}}$;
coupling strength $\epsilon>0$; selection strategy $S$; Boolean \texttt{spectral\_map}
\Ensure \textbf{Augmented graph} $G^{\mathrm{rwd}}$
\Statex \quad $ = (V^{\mathrm{rwd}},E^{\mathrm{rwd}},X^{\mathrm{rwd}},w^{\mathrm{rwd}})$

\State $n \gets \textsc{FindN}(|V_{\mathrm{in}}|, S)$
\State $\Gamma \gets \textsc{SchreierTransform}(n)$
, {$\Gamma=(V_\Gamma,E_\Gamma)$}

\If{\texttt{spectral\_map}}
    \State $\Phi_{\mathrm{in}} \gets G_{\mathrm{in}}$
    \State $\Phi_{\Gamma} \gets \Gamma$
    \State $\phi \gets
    \textsc{FiedlerRanking}(V_{\mathrm{in}},V_\Gamma,
    \Phi_{\mathrm{in}},\Phi_{\Gamma})$
\Else
    \State $\phi(u) \gets
    1+((\mathrm{idx}(u)-1)\bmod |V_\Gamma|)
    \quad \forall u\in V_{\mathrm{in}}$
\EndIf

\Statex {\textbf{Augment nodes and features}}
\State $V^{\mathrm{rwd}}
\gets V_{\mathrm{in}}\cup(V_\Gamma+|V_{\mathrm{in}}|)$
\State $X_{\Gamma}
\gets \textsc{LiftFeatures}(X_{\mathrm{in}},\phi,V_\Gamma)$
\State $X^{\mathrm{rwd}}
\gets [X_{\mathrm{in}}\Vert X_{\Gamma}]$

\Statex {\textbf{Augment edges}}
\State $E_{\Gamma}^{\uparrow}
\gets
\{(a+|V_{\mathrm{in}}|,b+|V_{\mathrm{in}}|):(a,b)\in E_\Gamma\}$
\State $E_{\mathrm{cpl}}
\gets
\{(u,\phi(u)+|V_{\mathrm{in}}|),
(\phi(u)+|V_{\mathrm{in}}|,u):u\in V_{\mathrm{in}}\}$
\State $E^{\mathrm{rwd}}
\gets E_{\mathrm{in}}\cup E_{\Gamma}^{\uparrow}\cup E_{\mathrm{cpl}}$

\Statex {\textbf{Edge weights}}
\State $w^{\mathrm{rwd}}(e)\gets 1
\quad \forall e\in
E_{\mathrm{in}}\cup E_{\Gamma}^{\uparrow}$
\State $w^{\mathrm{rwd}}(e)\gets \epsilon
\quad \forall e\in E_{\mathrm{cpl}}$

\State \Return
$G^{\mathrm{rwd}}=
(V^{\mathrm{rwd}},E^{\mathrm{rwd}},
X^{\mathrm{rwd}},w^{\mathrm{rwd}})$

\end{algorithmic}
\end{algorithm}

\textit{For \textsc{FindN}, \textsc{FiedlerRanking}, \textsc{LiftFeatures}  and \textsc{SchreierTransform}in \textbf{Algorithm~\ref{alg:scgr}} pseudocodes, refer to Appendix A.8.1-A.8.3.}

\paragraph{Rewiring Strategy.} A hierarchial communication layer is introduced, and  it facilitates global information flow while preserving the sparse nature of the input graph. The rewired graph \(G^{rwd}\) is constructed as \textit{\textbf{Algorithm~\ref{alg:scgr}}}:
\begin{itemize}[noitemsep]
    \item We generate an auxiliary 4-regular Schreier-Coset graph \(\Gamma = (V_{\Gamma},E_{\Gamma})\).
    \item We establish mapping \(\phi: V_{in} \to V_{\Gamma}\) using Fiedler rankings to ensure nodes are coupled based on their global connectivity.
    \item We then define the augmented edge set \(E^{rwd}\) by concatenating the original, augmented and coupling layer.
\end{itemize}
Here, \(E_{in}\): original input edges to maintain local inductive bias), \(E_{\Gamma}^{\uparrow}\): 4-regular expander edges shifted to auxiliary nodes. \(E_{cpl}\): bidirectional edges between \(u \in V_{in}\) and its mapped coset \(\phi(u) \in V_{\Gamma}\).
The rewired graph is defined over the augmented vertex set \(V^{rwd} = V_{in} \cup V_{\Gamma}\):\(G^{rwd} = (V^{rwd}, E^{rwd}), \quad E^{rwd} = E_{in} \cup E_{\Gamma}^{\uparrow} \cup E_{cpl}\). 
The augmented edges are weighted according to their layer:
\(w_{e} = 1, \forall e \in E_{in} \cup E_{\Gamma}^{\uparrow}; \quad w_{e} = \epsilon, \forall e \in E_{cpl}\)
where \(\epsilon > 0\) is a global strength parameter regulating the influence of the global communication layer.

As formulated in \cite{arnaiz2025oversmoothing} the \textbf{\textit{over-squashing}} phenomenon has two components, topological complexity and computational bottlenecks. The \textit{SCGR} method rests on three key pillars: (i) locality preservation through spectral embeddings, (ii) effective
resistance reduction via alternative low-resistance pathways, and (iii) quantifiable over-squashing
mitigation while maintaining competitive accuracy. Now, we first analyze the computational complexity of the approach. The practical implementation of \textit{SCGR} involves several computational components, each with well-defined complexity bounds. 

\textbf{Graph Construction:} The Schreier-coset graph $\Gamma$ has $V_\Gamma = O(n)$ vertices for $\pmod n$ (and $O(n \cdot \mathrm{polylog}(n))$ in general case), with constant degree $d = |\mathbb{S}| = 4$. Its edge set therefore satisfies $|E_\Gamma| = O(|V_\Gamma|)$. Constructing $\Gamma$ via coset representatives and generator multiplications requires $O(|V_\Gamma|)$ group operations, which can be cached once and reused across multiple input graphs. 

\textbf{Mapping and Rewiring:} The spectral mapping \(\phi: V_{in} \to V_{\Gamma}\) is computed using the Fiedler vector of the normalized Laplacian for the input and Schreier graphs. This needs \(O(T \cdot |E|)\) operations via \(T\) iterations of power method. The augmentation process is strictly linear, involves couplings between \(u\) and \(\phi(u)\), avoiding any quadratic distance over node pairs. 

\textbf{Message Passing:} Each GNN layer on the rewired graph requires \(O(|E^{rwd}|)\) operations. Since \(E^{rwd} = |E_{in}| + |E_{\Gamma}| + |E_{cpl}|\), and the Schreier graph \(\Gamma\) is a d-regular expander ($d=4$), the auxiliary edge count is \(2|V_{\Gamma}|\). The total edge complexity is \(O(|E_{in}|+|V_{in}|)\), ensuring per-layer computational overhead remains near-linear in the input size.

\textbf{Space Complexity:} The node set is \(V^{rwd} = V_{in} \cup V_{\Gamma}\) where \(|V_{\Gamma}| \approx |V_{in}|\) is chosen dynamically to ensure coverage. The total number of edges added \((|E_{in}| + |E_{cpl})\) scales linearly as \(O(|V_{in}|)\) due to the constant degree of the Schreier generators \((d=4)\) and the one to one  nature of the spectral coupling. This leads to memory overhead which is sub-quadratic.

\section{Theoretical Properties of SCGR}

We first show that the Schreier-coset graphs $\Gamma_p$ for odd primes form an expander family with strong spectral and mixing properties; enabling efficient information propagation through low ER paths. 

\begin{lemma}[Uniform Spectral Gap]\label{lem:spectral-gap}
    For primes $p \geq 3$, let $\mathcal{G}_p=\operatorname{SL}(2,\mathbb{F}_p)$ and let $\mathbb{S}_p\subseteq \mathcal{G}_p$ be the generating set $\mathbb{S}_p = \left\{ \begin{pmatrix} 1 & \pm1 \\ 0 & 1 \end{pmatrix}, \begin{pmatrix} 1 & 0 \\ \pm1 & 1 \end{pmatrix}\right\}$, and $H_p \leq \mathcal{G}_p$ be the diagonal determinant-one subgroup. Then there exists $\gamma_0>0$ independent of $p$ such that the Schreier-coset graphs $\Gamma_p$ on the left cosets $\mathcal{G}_p/H_p$ satisfy $$1-\max_{\lambda \neq 1} |\lambda(P_{\Gamma_p})|\geq \gamma_0,$$
    where $P_{\Gamma_p}$ is the transition matrix of the random walk on $\Gamma_p$ and the maximum ranges over its eigenvalues $\lambda \neq 1$.
\end{lemma}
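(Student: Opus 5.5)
The plan is to reduce the statement to the classical expansion of the Cayley graphs $\mathrm{Cay}(\mathcal{G}_p;\mathbb{S}_p)$. That expansion comes from Selberg's $3/16$ theorem, transferred to finite quotients in the usual way (Lubotzky).

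\textbf{Step 1: the Schreier spectrum sits inside the Cayley spectrum.} Pull back functions on $\mathcal{G}_p/H_p$ via $f\mapsto \tilde f(g)=f(gH_p)$. This identifies $\ell^2(\mathcal{G}_p/H_p)$ isometrically (up to the factor $|H_p|$) with the right-$H_p$-invariant functions on $\mathcal{G}_p$.

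The Cayley transition operator is $(P\tilde f)(g)=\frac14\sum_{s\in\mathbb{S}_p}\tilde f(sg)$. It acts by left multiplication, so it commutes with right translations and preserves this subspace. On the subspace it coincides with $P_{\Gamma_p}$, because $(sg)H_p$ is exactly the $s$-neighbour of $gH_p$. Constants go to constants, so every eigenvalue $\lambda\neq1$ of $P_{\Gamma_p}$ is an eigenvalue of $P_{\mathrm{Cay}}$ whose eigenfunction is orthogonal to constants.

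It therefore suffices to bound the nontrivial spectrum of $\mathrm{Cay}(\mathrm{SL}(2,\mathbb{F}_p);\mathbb{S}_p)$ uniformly in $p$.

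\textbf{Step 2: a one-sided gap from Selberg.} The matrices $\begin{pmatrix}1&1\\0&1\end{pmatrix}$ and $\begin{pmatrix}1&0\\1&1\end{pmatrix}$ generate $\mathrm{SL}(2,\mathbb{Z})$, and $\mathrm{SL}(2,\mathbb{Z})\to\mathrm{SL}(2,\mathbb{F}_p)$ is surjective. The Cayley graphs are therefore the quotients $\mathrm{SL}(2,\mathbb{Z})/\Gamma(p)$ by principal congruence subgroups.

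Selberg's bound $\lambda_1(\Gamma(p)\backslash\mathbb{H})\ge 3/16$ gives a uniform Kazhdan-type constant for the representations of $\mathrm{SL}(2,\mathbb{Z})$ on $\ell^2_0(\mathrm{SL}(2,\mathbb{F}_p))$. The standard comparison between the hyperbolic Laplacian and the combinatorial Laplacian (Lubotzky's book, Ch.~4) then yields $c>0$, independent of $p$, with $1-\lambda_2(P_{\mathrm{Cay}})\ge c$. By Step 1 this controls all eigenvalues $\lambda\ne1$ of $P_{\Gamma_p}$ from above.

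\textbf{Step 3: the negative end, which I expect to be the main obstacle.} The statement bounds $|\lambda|$, so we also need $1+\lambda_{\min}(P_{\Gamma_p})\ge\gamma_0'$ uniformly. Selberg says nothing here, and the Cayley graph is a poor witness. Its obvious odd relation $u^p=1$ has length $p$, and the abelianization of $\mathrm{SL}(2,\mathbb{Z})$ is $\mathbb{Z}/12$, so short relators are even. So I would work directly on the Schreier graph, which has more short odd cycles.

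I would use Trevisan's dual Cheeger inequality: $1+\lambda_{\min}$ is bounded below by a function of the bipartiteness ratio $\beta(\Gamma_p)$. It then suffices to show $\Gamma_p$ is uniformly far from bipartite.

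The key observation is that $w=u\,l^{-1}u=\begin{pmatrix}0&1\\-1&0\end{pmatrix}$ is a word of odd length $3$ that normalizes $H_p$. A coset $gH_p$ lies on a closed walk of length $3$ (or a bounded odd length) whenever $g^{-1}wg\in H_p$, or more generally whenever some short odd word $w'$ satisfies $g^{-1}w'g\in H_p$. I would enumerate a fixed finite family of short odd words, such as $w$, $u^{\pm1}l^{\mp1}u^{\pm1}$ and their conjugates by short words. I would then count, via eigenvalue and trace conditions over $\mathbb{F}_p$, that a positive proportion of cosets lie on such bounded odd cycles. The parity of $p$ modulo $4$ may need a case split, since $w$ is diagonalizable over $\mathbb{F}_p$ exactly when $p\equiv1\pmod 4$.

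Combining this positive density of bounded odd cycles with the edge expansion from Step 2 should bound $\beta(\Gamma_p)$ away from $0$. Trevisan's inequality then gives $\gamma_0'$, and we take $\gamma_0=\min(c,\gamma_0')$.

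If this counting fails for some residue class of $p$, the fallback is to prove the statement for the lazy walk $\frac12(I+P_{\Gamma_p})$. For that walk Step 2 alone suffices, with $\gamma_0=c/2$.
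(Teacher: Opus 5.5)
\textbf{Steps 1 and 2 are sound.} They match the paper's skeleton. The paper takes the one-sided gap from Bourgain--Gamburd; your Selberg--Lubotzky route is the more natural reference here, since $\langle u,l\rangle=\mathrm{SL}(2,\mathbb{Z})$. The paper closes with the same spectral inclusion you prove in Step 1.

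\textbf{The gap is Step 3.} The count you propose there is false, not merely undone. The stabilizer of $(\infty,0)\in\mathbb{P}^1(\mathbb{F}_p)^2$ is exactly $H_p$. So $V_{\Gamma_p}$ is the set of ordered pairs of distinct points of $\mathbb{P}^1(\mathbb{F}_p)$, with edges $(x,y)\sim(sx,sy)$.
\begin{itemize}
\item A word $W$ spells a closed walk at $gH_p$ iff $W$ fixes both $g\infty$ and $g0$. A non-central element of $\mathrm{SL}(2,\mathbb{F}_p)$ fixes at most two points of $\mathbb{P}^1$, so each non-central word closes at no more than two vertices. Your $w=ul^{-1}u$, i.e.\ $z\mapsto -1/z$, closes only at $(i,-i)$ and $(-i,i)$ when $p\equiv 1 \pmod{4}$, and nowhere otherwise.
\item An odd-length word that is central in $\mathrm{SL}(2,\mathbb{F}_p)$ cannot equal $\pm I$ in $\mathrm{SL}(2,\mathbb{Z})$. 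This follows from your own parity observation: $u\mapsto 1$, $l\mapsto -1$ and $-I\mapsto 6$ in $\mathbb{Z}/12$. So such a word lies in $\pm\Gamma(p)\setminus\{\pm I\}$ and has an entry of absolute value at least $p-1$. Entries at most double with each letter, so its length is at least $\log_2(p-1)$.
\end{itemize}
Hence, for fixed $L$ and large $p$, at most $O(4^L)$ of the $p(p+1)$ vertices lie on odd closed walks of length $\le L$. That proportion tends to $0$, so no fixed family of short odd words, or their conjugates, can bound the bipartiteness ratio from below.

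\textbf{The fallback does not prove the lemma.} Passing to the lazy walk proves a different statement. The lemma, and the Expander Mixing lemma that uses $|\lambda_k|\le 1-\gamma_0$, concern $P_{\Gamma_p}$ itself.

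\textbf{The missing idea.} The Cayley graph \emph{is} the right witness for the negative end. What it uses is not short odd cycles but the absence of index-$2$ subgroups. For odd $p$, $\mathcal{G}_p$ is generated by unipotents of odd order $p$, so it has no surjection onto $\mathbb{Z}/2$. Breuillard--Green--Guralnick--Tao (Prop.~E.1) then upgrades a one-sided gap $\varepsilon$ for $\mathrm{Cay}(\mathcal{G}_p,\mathbb{S}_p)$ to a two-sided gap depending only on $\varepsilon$ and $|\mathbb{S}_p|=4$. This is what the paper does, and your Step 1 then transfers both ends of the spectrum to $\Gamma_p$.

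In outline, the mechanism is:
\begin{itemize}
\item If $Pf=\lambda f$ with $\lambda$ near $-1$, then $|f|$ is nearly $\mathbb{S}_p$-invariant, hence nearly constant, and $\sigma=\operatorname{sgn} f$ satisfies $\sigma(sg)\approx-\sigma(g)$.
\item For each $h$, the function $g\mapsto\sigma(g)\sigma(gh)$ is then nearly $\mathbb{S}_p$-invariant, hence close to a constant $\chi(h)\in\{\pm1\}$.
\item $\chi$ is a homomorphism to $\{\pm1\}$, hence trivial. This makes $\sigma$ nearly constant, contradicting $\sigma(sg)\approx-\sigma(g)$.
\end{itemize}
The argument relies on right translations. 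These are automorphisms of the Cayley graph but are unavailable on $\Gamma_p$, since only $N(H_p)$ acts on $\mathcal{G}_p/H_p$ from the right. That is exactly why attacking the negative end directly on the Schreier graph is the harder, and here unworkable, route.
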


\begin{lemma}[Expander Mixing]\label{lem:expander-mix}
For primes $p\geq 3$ and all $t \geq 0$, the random walk matrix $P_{\Gamma_p}$ on a Schreier-coset graph $\Gamma_p$ with uniform spectral gap $\gamma_0$ satisfies
\[
\left|(P_{\Gamma_p}^t)_{iv} - \frac{1}{|V_{\Gamma_p}|}\right| \leq \left(1-\gamma_0\right)^t.
\]
If $t \geq \frac{\log(2|V_{\Gamma_p}|)}{\gamma_0}$, then $(P_{\Gamma_p}^t)_{iv} \geq \frac{1}{2|V_{\Gamma_p}|}$.
\end{lemma}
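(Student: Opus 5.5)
We will prove the Expander Mixing statement by spectral decomposition of $P_{\Gamma_p}$, using Lemma~\ref{lem:spectral-gap} as the only input. First we check that $\Gamma_p$ is $d$-regular with $d=|\mathbb{S}_p|=4$. The generator set is closed under inverses, and left multiplication by $s$ permutes the left cosets $\mathcal{G}_p/H_p$. Hence $A_{\Gamma_p}$, counted with multiplicity so that loops and multi-edges are allowed, is symmetric, and $P_{\Gamma_p}=A_{\Gamma_p}/4$ is a symmetric doubly stochastic matrix. We also note that $\Gamma_p$ is connected, since $\mathbb{S}_p$ generates $\operatorname{SL}(2,\mathbb{F}_p)$. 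This is also implicit in Lemma~\ref{lem:spectral-gap}, because a positive gap forces eigenvalue $1$ to be simple. Write $N=|V_{\Gamma_p}|$.

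Next we diagonalize. Take an orthonormal eigenbasis $u_1=\mathbf{1}/\sqrt{N},u_2,\dots,u_N$ with $P_{\Gamma_p}u_k=\lambda_k u_k$. Then
\[
P_{\Gamma_p}^t-\tfrac{1}{N}\mathbf{1}\mathbf{1}^\top=\sum_{k\ge 2}\lambda_k^t u_k u_k^\top .
\]
The key fact is that the eigenvalue $1$ has multiplicity one. This holds because Lemma~\ref{lem:spectral-gap} bounds every other eigenvalue (including a possible $-1$) in absolute value by $1-\gamma_0$. The entry bound follows from Cauchy--Schwarz:
\[
\Bigl|(P_{\Gamma_p}^t)_{iv}-\tfrac1N\Bigr|=\bigl|e_i^\top(\cdot)e_v\bigr|\le (1-\gamma_0)^t\,\|e_i-\tfrac1N\mathbf 1\|\,\|e_v-\tfrac1N\mathbf 1\|\le(1-\gamma_0)^t .
\]
Here we use that the operator norm of $\sum_{k\ge2}\lambda_k^tu_ku_k^\top$ is at most $\max_{k\ge2}|\lambda_k|^t$, and that projecting the unit vectors $e_i,e_v$ onto $\mathbf 1^\perp$ does not increase their norm. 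For $t=0$ the bound is trivial, since the left side is at most $1$.

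For the second claim we combine $1-\gamma_0\le e^{-\gamma_0}$ with $t\ge \log(2N)/\gamma_0$ to get $(1-\gamma_0)^t\le e^{-\gamma_0 t}\le 1/(2N)$. Therefore $(P^t)_{iv}\ge 1/N-1/(2N)=1/(2N)$.

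No step here is genuinely hard once Lemma~\ref{lem:spectral-gap} is granted. The one point needing care is that the statement controls $\max|\lambda|$ rather than only $\lambda_2$, which is exactly what excludes bipartiteness and periodicity; without it the entry bound fails at odd versus even $t$. If a referee needs the sharper form, note that the projection factors give $\|e_i-\mathbf 1/N\|^2=1-1/N$. This yields the slightly better constant $(1-1/N)(1-\gamma_0)^t$, which is not needed.
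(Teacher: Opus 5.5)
Your proof is correct and follows the same route as the paper. Both arguments diagonalize the symmetric matrix $P_{\Gamma_p}$, split off $\frac{1}{N}\mathbf{1}\mathbf{1}^\top$ with $N=|V_{\Gamma_p}|$, bound the remainder via $|\lambda_k|\le 1-\gamma_0$ for $k\ge 2$ and Cauchy--Schwarz, and finish with $(1-\gamma_0)^t\le e^{-\gamma_0 t}\le 1/(2N)$; the only differences are that you phrase Cauchy--Schwarz as an operator-norm bound on the projections of $e_i,e_v$ onto $\mathbf{1}^\perp$ (the paper applies it coordinatewise to $\sum_{k\ge 2}|u_k(i)||u_k(v)|$), and that you explicitly check symmetry, regularity and connectivity (hence simplicity of the eigenvalue $1$), which the paper takes for granted.
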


\begin{lemma}[Effective Resistance Bound]\label{lem:eff-res}
For any prime $p \geq 3$ and vertices $u,v \in V_\Gamma$,
\[
R_{\mathrm{eff}}^{\Gamma_p}(u,v) \leq \frac{2}{d\gamma_0},
\]
where $d = |\mathbb{S}|$ is the degree and $\gamma_0$ is the uniform spectral gap.
\end{lemma}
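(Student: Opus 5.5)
The plan is to prove Lemma~\ref{lem:eff-res} through the spectral representation of effective resistance, with Lemma~\ref{lem:spectral-gap} supplying the only needed input.

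\textbf{Spectral formula.} Since $\Gamma_p$ is $d$-regular, the combinatorial Laplacian is $L = dI - A = d(I - P_{\Gamma_p})$, where $P_{\Gamma_p}=A/d$ is symmetric. Let $1=\mu_1>\mu_2\ge\cdots\ge\mu_N$ be the eigenvalues of $P_{\Gamma_p}$, with $N=|V_{\Gamma_p}|$, and let $f_1=\mathbf{1}/\sqrt N, f_2,\dots,f_N$ be an orthonormal eigenbasis. Then
\[
R_{\mathrm{eff}}^{\Gamma_p}(u,v) = (e_u-e_v)^\top L^{+}(e_u-e_v) = \frac{1}{d}\sum_{k\ge 2}\frac{(f_k(u)-f_k(v))^2}{1-\mu_k}.
\]
Connectivity of $\Gamma_p$ (the elementary matrices generate $\operatorname{SL}(2,\mathbb{F}_p)$, so the action on cosets is transitive) guarantees that $\mu_k<1$ for all $k\ge2$, so the formula is well defined.

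\textbf{Applying the gap.} Lemma~\ref{lem:spectral-gap} gives $1-\mu_k\ge 1-|\mu_k|\ge\gamma_0$ for every $k\ge 2$. Therefore
\[
R_{\mathrm{eff}}^{\Gamma_p}(u,v)\le \frac{1}{d\gamma_0}\sum_{k\ge2}(f_k(u)-f_k(v))^2 .
\]
By completeness of the orthonormal basis,
\[
\sum_{k\ge1}(f_k(u)-f_k(v))^2=\|e_u-e_v\|^2=2.
\]
The $k=1$ term vanishes because $f_1$ is constant, so the remaining sum equals $2$. This yields $R_{\mathrm{eff}}^{\Gamma_p}(u,v)\le 2/(d\gamma_0)$, uniformly in $p$.

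\textbf{Main obstacle.} The only delicate point is the regularity and self-adjointness of the Schreier graph. $\mathbb{S}_p$ is closed under inverses, so each edge $gH\to sgH$ is paired with the edge $sgH\to s^{-1}sgH$. Hence $A$ is symmetric and every vertex has degree exactly $d=4$, counting loops and multi-edges with multiplicity. Loops do occur, for instance for cosets fixed by a unipotent element, but they cancel in $dI-A$ and contribute nothing to the resistance. Multi-edges simply enter as weights, so the formula above still applies.

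An alternative route would go through commute time, $R=\kappa(u,v)/(2|E|)$, together with the mixing bound of Lemma~\ref{lem:expander-mix}. That approach introduces $\log N$ factors, so the direct spectral argument is preferable.
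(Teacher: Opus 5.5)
Your proof is correct and is essentially the paper's argument. The paper imports the bound $R_{\mathrm{eff}}(u,v)\le \frac{1}{\lambda_2(\tilde{L})}\left(\frac{1}{d(u)}+\frac{1}{d(v)}\right)$ from Cai et al.\ and then uses $\lambda_2(\tilde{L})=1-\lambda_2(P)\ge\gamma_0$ for the $d$-regular graph; you derive that same inequality yourself from the eigen-expansion of $L^{+}$ and Parseval, which makes the step self-contained.

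One harmless inaccuracy in your aside: loops do not occur in $\Gamma_p$. Indeed, $sgH=gH$ would force $g^{-1}sg\in H$, and a diagonal matrix of determinant $1$ and trace $2$ is the identity, so no nontrivial unipotent generator fixes a coset. Your argument is unaffected either way.
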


By bounding the effective resistance as a function of the uniform spectral gap $\gamma_0$, Lemma~\ref{lem:eff-res} guarantees the Schreier-coset family avoids structural bottlenecks, forming a strong communication backbone for the input graph.

\textbf{Locality Preservation via Spectral Embeddings.} The following result gives sufficient conditions under which the node--coset map preserves locality up to a controlled additive and multiplicative distortion.

\begin{theorem}[Locality Control]\label{thm:lipschitz}
Assume there exist constants $c_{\mathrm{in}}<\infty$ and $c_\Gamma>0$ such that $\Phi_{\mathrm{in}}$ and $\Phi_{\Gamma}$ satisfy
$$\|\Phi_{\mathrm{in}}(u)-\Phi_{\mathrm{in}}(v)\|_2 \leq c_{\mathrm{in}}\operatorname{dist}_{\mathrm{in}}(u,v)$$
for every $u,v \in V_{\mathrm{in}}$ and
$$\|\Phi_\Gamma(x)-\Phi_\Gamma(y)\|_2 \geq c_\Gamma \operatorname{dist}_{\Gamma}(x,y)$$
for every $x,y \in \phi(V_{\mathrm{in}})$. If $\|\Phi_\Gamma(\phi(v))-\Phi_{\mathrm{in}}(v)\|_2 \leq \epsilon$ for every $v$, then
$$\operatorname{dist}_\Gamma(\phi(u),\phi(v))\leq \frac{c_{\mathrm{in}}}{c_\Gamma}\operatorname{dist}_{\mathrm{in}}(u,v)+\frac{2\epsilon}{c_\Gamma}.$$
\end{theorem}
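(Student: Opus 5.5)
The plan is a direct chain of inequalities: use the lower Lipschitz (co-Lipschitz) bound on $\Phi_\Gamma$ to turn the $\Gamma$-distance into an embedding distance, then use the triangle inequality in $\mathbb{R}^r$ to pass through the input embedding. Fix $u,v \in V_{\mathrm{in}}$ and put $x=\phi(u)$, $y=\phi(v)$. Both points lie in $\phi(V_{\mathrm{in}})$, so the hypothesis on $\Phi_\Gamma$ applies and gives
\[
c_\Gamma \operatorname{dist}_\Gamma(\phi(u),\phi(v)) \le \|\Phi_\Gamma(\phi(u))-\Phi_\Gamma(\phi(v))\|_2 .
\]
Note that this is precisely why the lower bound is only needed on the image $\phi(V_{\mathrm{in}})$ rather than on all of $V_\Gamma$.

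Next I would insert $\Phi_{\mathrm{in}}(u)$ and $\Phi_{\mathrm{in}}(v)$ and split the right-hand side into three terms:
\[
\|\Phi_\Gamma(\phi(u))-\Phi_{\mathrm{in}}(u)\|_2+\|\Phi_{\mathrm{in}}(u)-\Phi_{\mathrm{in}}(v)\|_2+\|\Phi_{\mathrm{in}}(v)-\Phi_\Gamma(\phi(v))\|_2 .
\]
The two outer terms are each at most $\epsilon$, by the alignment constraint in the definition of $\phi$. The middle term is at most $c_{\mathrm{in}}\operatorname{dist}_{\mathrm{in}}(u,v)$ by the upper Lipschitz hypothesis on $\Phi_{\mathrm{in}}$.

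Combining the two displays gives
\[
c_\Gamma \operatorname{dist}_\Gamma(\phi(u),\phi(v)) \le c_{\mathrm{in}}\operatorname{dist}_{\mathrm{in}}(u,v)+2\epsilon .
\]
Dividing by $c_\Gamma>0$ yields the claimed bound.

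There is no real obstacle in this argument. The only points to check are that both embeddings take values in the same space $\mathbb{R}^r$, which holds because both use the eigenvectors indexed $2,\dots,r+1$, so the mixed differences make sense; that $c_\Gamma>0$, so the division is legitimate; and that the hypotheses are applied exactly where they were assumed. If $\phi(u)=\phi(v)$ the bound is trivial, since the left side is $0$. The spectral-gap results (Lemmas~\ref{lem:spectral-gap}--\ref{lem:eff-res}) are not needed for this statement.
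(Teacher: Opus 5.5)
Your proposal is correct and follows the paper's proof essentially step for step. The paper likewise uses the triangle inequality through $\Phi_{\mathrm{in}}(u)$ and $\Phi_{\mathrm{in}}(v)$ to bound $\|\Phi_\Gamma(\phi(u))-\Phi_\Gamma(\phi(v))\|_2$ by $c_{\mathrm{in}}\operatorname{dist}_{\mathrm{in}}(u,v)+2\epsilon$, then applies the lower bound on $\phi(V_{\mathrm{in}})$ and divides by $c_\Gamma$.
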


While the metric assumptions in Theorem~\ref{thm:lipschitz} can be restrictive for arbitrary graphs, they provide a sufficient condition for the node-coset map to preserve locality. This motivates the \texttt{spectral\_map} strategy. Laplacian eigenmaps minimize Dirichlet energy and hence favor embeddings that vary smoothly across adjacent vertices. On the other hand, \texttt{FiedlerRanking} aligns the vertices according to their positions along a spectral coordinate, yielding an approximate preservation of locality structure. However, neither the upper and lower Lipschitz bounds nor the uniform alignment condition is guaranteed. The theorem should thus be viewed as a conditional guarantee, while Fiedler rank matching serves as a practical heuristic.

\paragraph{Effective Resistance Analysis of the Rewired Graph.}
Theorem~\ref{thm:eff-rwd} shows that the coupling adds a low-resistance bypass via $\Gamma$, reducing $R_{\mathrm{eff}}$ for distant pairs and improving long-range information flow.

\begin{theorem}[Effective Resistance in Rewired Graph]\label{thm:eff-rwd}
In $G^{\mathrm{rwd}}$, for any $u,v\in V_{\mathrm{in}}$,
\[
R_{\mathrm{eff}}^{\mathrm{rwd}}(u,v)\le
\min\Big\{R_{\mathrm{eff}}^{\mathrm{in}}(u,v),\;
R_{\mathrm{eff}}^\Gamma(\phi(u),\phi(v))+\frac{2}{\varepsilon}\Big\}.
\]
\end{theorem}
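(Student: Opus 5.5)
We prove each term of the minimum as a separate upper bound. Both follow from Rayleigh's monotonicity law and the series law for effective resistance, viewing $G^{\mathrm{rwd}}$ as an electrical network with conductance $w_e$ on edge $e$. Recall that $E_{\mathrm{in}}$ and $E_\Gamma^{\uparrow}$ carry conductance $1$ and each coupling edge in $E_{\mathrm{cpl}}$ carries conductance $\varepsilon$, hence resistance $1/\varepsilon$.

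\emph{First bound.} The subgraph of $G^{\mathrm{rwd}}$ induced by $E_{\mathrm{in}}$ on $V_{\mathrm{in}}$ is exactly $G_{\mathrm{in}}$ with unit weights. $G^{\mathrm{rwd}}$ is obtained from it by adding the vertices $V_\Gamma$ (shifted) together with the edges $E_\Gamma^{\uparrow}\cup E_{\mathrm{cpl}}$, all of positive conductance. Adding isolated vertices does not change effective resistance between $u$ and $v$. Adding edges can only increase conductances from zero to a positive value. By Rayleigh's monotonicity law, effective resistance is non-increasing under this operation, so $R_{\mathrm{eff}}^{\mathrm{rwd}}(u,v)\le R_{\mathrm{eff}}^{\mathrm{in}}(u,v)$.

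\emph{Second bound.} I would use the Thomson (energy-minimization) principle. Write $R_{\mathrm{eff}}(u,v)=\min\{\mathcal{E}(\theta)\}$, minimized over unit $u\to v$ flows $\theta$, where $\mathcal{E}(\theta)=\sum_e \theta_e^2/w_e$. Construct a flow in $G^{\mathrm{rwd}}$ in three stages:
\begin{itemize}
\item[(i)] send one unit along the coupling edge $u\to\phi(u)$, with energy $1/\varepsilon$;
\item[(ii)] route one unit from $\phi(u)$ to $\phi(v)$ inside the copy of $\Gamma$ using the unit electrical flow of $\Gamma$, with energy $R_{\mathrm{eff}}^\Gamma(\phi(u),\phi(v))$;
\item[(iii)] send one unit along $\phi(v)\to v$, with energy $1/\varepsilon$.
\end{itemize}
The three pieces are supported on disjoint edge sets, so energies add, and the total energy is at most $R_{\mathrm{eff}}^\Gamma(\phi(u),\phi(v))+2/\varepsilon$. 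Thomson's principle then gives this quantity as an upper bound on $R_{\mathrm{eff}}^{\mathrm{rwd}}(u,v)$. Equivalently, it is the series law applied to the subnetwork consisting of the two coupling edges and $\Gamma$, followed by monotonicity. Taking the minimum of the two bounds yields the theorem.

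The only genuine subtleties are bookkeeping. The first is degenerate cases: if $u=v$, both sides are zero. If $\phi(u)=\phi(v)$, stage (ii) is empty, and the bound $2/\varepsilon$ still holds.

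The second, which I expect to be the main point to check carefully, is the case $|V_{\mathrm{in}}|>|V_\Gamma|$ with $q$ disjoint copies of $\Gamma$. If $u$ and $v$ are mapped into different copies, then no path exists through the auxiliary layer alone. There $R_{\mathrm{eff}}^\Gamma(\phi(u),\phi(v))$ must be interpreted as $+\infty$, making the second term vacuous, or one uses the $\Gamma\times K_q$ variant, where the same flow argument applies inside the connected auxiliary graph.

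I would also note that the weights on $E_{\mathrm{cpl}}$ are applied per undirected coupling edge, even though Algorithm~\ref{alg:scgr} lists both orientations. This is what makes the resistance of each coupling edge exactly $1/\varepsilon$. Combined with Lemma~\ref{lem:eff-res}, the result immediately gives the uniform bound $2/(d\gamma_0)+2/\varepsilon$.
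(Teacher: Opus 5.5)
Your proposal is correct and follows essentially the paper's argument. The paper also applies Thomson's principle to two feasible unit flows: the optimal flow inside $G_{\mathrm{in}}$, which your Rayleigh-monotonicity step simply repackages, and the coupling-edge/$\Gamma$/coupling-edge flow of energy $R_{\mathrm{eff}}^\Gamma(\phi(u),\phi(v))+2/\varepsilon$. Your remarks on the degenerate cases $u=v$ and $\phi(u)=\phi(v)$, and on images in different copies of $\Gamma$ (second term read as $+\infty$), are sound bookkeeping that the paper leaves implicit.
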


Viewing the augmented graph as an electrical network, Thomson's principle ensures that the effective resistance cannot exceed the energy of a flow routed strictly through the original graph, giving $R_{\mathrm{eff}}^{\mathrm{in}}$, or routed strictly through the $\Gamma$-layer, giving the right-hand term.

\paragraph{Information Flow and Over-Squashing Mitigation.} The connection between effective resistance and information propagation in neural networks is well established. In message-passing networks, the gradient flow between distant nodes is inversely proportional to their effective resistance.

\begin{theorem}[Over-Squashing Mitigation]\label{thm:squashing}
For any $u,v\in V_{\mathrm{in}}$ with $u \neq v$,
\[
\rho(u,v):=\frac{R_{\mathrm{eff}}^{\mathrm{in}}(u,v)}{R_{\mathrm{eff}}^{\mathrm{rwd}}(u,v)}
\ge
\max\left\{1,\;\frac{R_{\mathrm{eff}}^{\mathrm{in}}(u,v)}{R_{\mathrm{eff}}^\Gamma(\phi(u),\phi(v))+\frac{2}{\varepsilon}}\right\}.
\]
\end{theorem}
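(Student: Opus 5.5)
The plan is to derive Theorem~\ref{thm:squashing} directly from Theorem~\ref{thm:eff-rwd}, since $\rho(u,v)$ is the reciprocal of a ratio that theorem already controls.

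\emph{Step 1: well-definedness.} For $u\neq v$ in $V_{\mathrm{in}}$, both $R_{\mathrm{eff}}^{\mathrm{in}}(u,v)$ and $R_{\mathrm{eff}}^{\mathrm{rwd}}(u,v)$ are strictly positive and finite. Finiteness holds because $G_{\mathrm{in}}$ is connected and $G^{\mathrm{rwd}}\supseteq G_{\mathrm{in}}$ (with unit weights on $E_{\mathrm{in}}$). Positivity holds because the effective resistance between distinct vertices of a finite connected network with finite positive conductances is positive: by Thomson's principle every unit $u$--$v$ flow has positive energy. Hence $\rho(u,v)$ is a well-defined positive real.

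\emph{Step 2: invert the bound.} Write $A=R_{\mathrm{eff}}^{\mathrm{in}}(u,v)$ and $B=R_{\mathrm{eff}}^\Gamma(\phi(u),\phi(v))+2/\varepsilon>0$. Theorem~\ref{thm:eff-rwd} gives $0<R_{\mathrm{eff}}^{\mathrm{rwd}}(u,v)\le\min\{A,B\}$. Since $t\mapsto 1/t$ is decreasing on $(0,\infty)$, we obtain $1/R_{\mathrm{eff}}^{\mathrm{rwd}}(u,v)\ge 1/\min\{A,B\}=\max\{1/A,1/B\}$. Multiplying by $A>0$ gives
\[
\rho(u,v)=\frac{A}{R_{\mathrm{eff}}^{\mathrm{rwd}}(u,v)}\ge\max\Big\{1,\frac{A}{B}\Big\},
\]
which is exactly the claim.

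\emph{Main obstacle.} The algebra is routine, so essentially all the content is inherited from Theorem~\ref{thm:eff-rwd}. The only delicate point is the positivity and finiteness in Step 1, which is needed to divide legitimately. If $\phi(u)=\phi(v)$, then $R^\Gamma_{\mathrm{eff}}=0$, but $B\ge 2/\varepsilon>0$ still holds, so no separate case is required. For completeness, I would also check that the first term of Theorem~\ref{thm:eff-rwd} is just Rayleigh monotonicity, since adding edges and vertices cannot increase effective resistance. The second term comes from routing a unit flow $u\to\phi(u)$ across a coupling edge of weight $\varepsilon$ (energy $1/\varepsilon$), then through $\Gamma$ along the optimal unit flow, then $\phi(v)\to v$ (energy $1/\varepsilon$). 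This confirms that the bound invoked in Step 2 holds for all $u\neq v$.
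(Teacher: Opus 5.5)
Your proposal is correct and matches the paper's proof: both invert the bound $R_{\mathrm{eff}}^{\mathrm{rwd}}(u,v)\le\min\{A,B\}$ from Theorem~\ref{thm:eff-rwd} to get $1/R_{\mathrm{eff}}^{\mathrm{rwd}}(u,v)\ge\max\{1/A,1/B\}$, then multiply by $A=R_{\mathrm{eff}}^{\mathrm{in}}(u,v)$. Your explicit positivity and finiteness check is a useful tightening, since the paper only notes $R\ge 0$, yet dividing requires $R_{\mathrm{eff}}^{\mathrm{rwd}}(u,v)>0$.
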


If $R_{\mathrm{eff}}^{\mathrm{in}}(u,v)$ grows rapidly with distance while
$R_{\mathrm{eff}}^\Gamma(\phi(u),\phi(v))\le \frac{2}{d\gamma}$ is bounded, then
$\rho(u,v)\gtrsim {R_{\mathrm{eff}}^{\mathrm{in}}(u,v)}/({\frac{2}{d\gamma}+\frac{2}{\varepsilon}})$
can be very large.

\paragraph{Performance Guarantees.} While the previous theorems utilized single-layer information flow to establish bounds on effective resistance, optimally splitting the gradient flow across both layers in parallel yields a tighter bound on the total resistance:

\begin{theorem}[Performance Guarantees]\label{thm:perf}
    Suppose the transition matrices of random walks on the Schreier-coset graphs satisfy
    $$1-\lambda_2(P_{\Gamma_p})\geq \gamma_0>0$$
    uniformly in $p$ and define $B_0=\frac{2}{d\gamma_0}+\frac{2}{\epsilon}$. Then for every $u,v \in V_{\mathrm{in}}$ with $u \neq v$,
    $$R_{\mathrm{eff}}^{\mathrm{rwd}}(u,v) \leq \frac{R_{\mathrm{eff}}^{\mathrm{in}}(u,v)B_0}{R_{\mathrm{eff}}^{\mathrm{in}}(u,v)+B_0}.$$
    Consequently,
    $$\rho(u,v) \geq 1 +\frac{R_{\mathrm{eff}}^{\mathrm{in}}(u,v)}{B_0}.$$
\end{theorem}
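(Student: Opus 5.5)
The plan is to read $G^{\mathrm{rwd}}$ as an electrical network and use the parallel law for effective resistance. The network contains two edge-disjoint routes between $u$ and $v$.

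\emph{Route A} is the original subgraph $G_{\mathrm{in}}$, with unit weights. Its effective resistance between $u$ and $v$ is exactly $R_A := R_{\mathrm{eff}}^{\mathrm{in}}(u,v)$.

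\emph{Route B} consists of three pieces in series: the coupling edge $u\to\phi(u)$ of conductance $\epsilon$, the copy of $\Gamma$ (more precisely the copy containing $\phi(u),\phi(v)$), and the coupling edge $\phi(v)\to v$. Since these pieces are in series, its resistance is
\[
R_B = \tfrac1\epsilon + R_{\mathrm{eff}}^{\Gamma}(\phi(u),\phi(v)) + \tfrac1\epsilon .
\]
Lemma~\ref{lem:eff-res} gives $R_{\mathrm{eff}}^{\Gamma_p}\le 2/(d\gamma_0)$, and its proof only uses the hypothesis $1-\lambda_2(P_{\Gamma_p})\ge\gamma_0$ assumed here. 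Hence $R_B\le B_0$. The case $\phi(u)=\phi(v)$ is trivial: then the $\Gamma$-resistance is $0$ and the bound still holds.

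Route A and Route B share only the endpoints $u,v$, since they use disjoint edge sets $E_{\mathrm{in}}$ versus $E_\Gamma^{\uparrow}\cup E_{\mathrm{cpl}}$. I would not argue that the whole network is literally the parallel combination, because $E_{\mathrm{cpl}}$ also attaches every other input vertex to $\Gamma$. Instead I would use Rayleigh monotonicity together with Thomson's principle.

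First, delete every coupling edge except the two at $u$ and $v$. Deleting edges (equivalently, setting their conductance to zero) can only increase effective resistance. So $R_{\mathrm{eff}}^{\mathrm{rwd}}(u,v)\le R'$, where $R'$ is the resistance in the pruned network.

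In the pruned network, Route A and Route B meet only at $u$ and $v$. Take a unit $u$–$v$ flow that sends $\theta$ units as the optimal flow in $G_{\mathrm{in}}$ and $1-\theta$ units as the optimal flow through Route B. Its energy is $\theta^2R_A+(1-\theta)^2R_B$. Minimizing over $\theta$ gives $R_AR_B/(R_A+R_B)$; in fact, by the parallel law, this is exactly $R'$.

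Because $x\mapsto R_Ax/(R_A+x)$ is increasing in $x$, the bound $R_B\le B_0$ yields
\[
R_{\mathrm{eff}}^{\mathrm{rwd}}(u,v)\le \frac{R_A B_0}{R_A+B_0}.
\]

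The consequence for $\rho$ then follows by dividing. For $u\ne v$ in a connected graph we have $R_A>0$, so
\[
\rho=\frac{R_A}{R^{\mathrm{rwd}}}\ge \frac{R_A+B_0}{B_0}=1+\frac{R_A}{B_0}.
\]

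The main obstacle is the bookkeeping for the Route B piece. If several copies of $\Gamma$ are used, $\phi(u)$ and $\phi(v)$ may lie in different copies. The uniform bound on the Route B resistance, and hence $R_B\le B_0$, then needs the copies to be connected, for instance via $\Gamma\times K_q$. I would state the result under the single-copy (or connected-product) assumption and note where this is needed. A second point to be careful about is the constant in Lemma~\ref{lem:eff-res}. That lemma relies on the spectral-gap bound $R_{\mathrm{eff}}\le \tfrac{1}{d}\sum_{k\ge2}\frac{(f_k(x)-f_k(y))^2}{1-\lambda_k}\le \frac{2}{d\gamma_0}$ for orthonormal eigenvectors $f_k$ of $P_\Gamma$ (using $\sum_k (f_k(x)-f_k(y))^2 = 2$). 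This requires only the one-sided gap $1-\lambda_2\ge\gamma_0$, which is exactly the present hypothesis, so invoking the lemma is legitimate.
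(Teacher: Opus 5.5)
Your proof is correct and follows essentially the same argument as the paper. Both split a unit flow as $\theta f_{\mathrm{in}}+(1-\theta)f_\Gamma$ over the edge-disjoint input layer and coupling-plus-$\Gamma$ route, optimize $\theta$ under Thomson's principle to get $R_AR_B/(R_A+R_B)$, then conclude using $R_B\le B_0$ from Lemma~\ref{lem:eff-res} (which indeed needs only the one-sided gap) and the monotonicity of $x\mapsto R_Ax/(R_A+x)$. Your Rayleigh pruning step is unnecessary, since that flow is already admissible in the full network, but it is harmless; and your caveat that $\phi(u)$ and $\phi(v)$ must lie in a common connected copy of $\Gamma$ is a genuine implicit assumption that the paper's proof also relies on without stating it.
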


These results make the impact of SCGR \emph{dimension-free}: the post-wiring resistance between any two nodes is capped by the expander parameters $d$ and $\gamma_0$ and the coupling $\varepsilon$, bypassing the input graph's topology. By explicitly bounding the improvement factor $\rho(u,v)$, we quantify how SCGR collapses the long-range bottlenecks responsible for over-squashing. Crucially, because the $\Gamma$-overlay is constant-degree, these long-range communication guarantees are achieved with \emph{near-linear} computational overhead.

\noindent\textit{All proofs are provided in the Appendix.}

\section{Experiments}

The efficacy of \textit{SCGR} is validated extensively on diverse node and graph classification benchmarks. We also conduct experiments on stochastic block models with controllable modularity to demonstrating \textit{SCGR}'s behavior across different community structures. The tests are run on Nvidia A100 GPU.

\subsection{Node Classification}
We predict the label of individual nodes given a graph, node features and a subset of labeled nodes. The task: labels are available only for a portion of the nodes, and the model must leverage both local features and graph structures to infer the labels of remaining nodes.  
For Node Classification, we use:
\textit{Amazon Computers \& Photo},
\textit{CoAuthor CS} \citep{shchur2018pitfalls},\textit{CiteSeer} ,\textit{Cora \& PubMed} \citep{sen2008collective} datasets. 

Each model is trained with \textit {epochs: 200, layers: 4,hidden dimension: 256 and dropout rate: 0.5} following the hyperparameters (refer \textbf{\textit{Appendix. A.8.5}}, \cite{DBLP:journals/corr/abs-1909-11793}). All experiments are repeated 20 times to ensure statistical robustness. Comparisons are made against standard baseline models including \textit{LogReg} \citep{chapelle2009semi}, \textit{MLP} \citep{werbos1974beyond}, \textit{GAT} \citep{velivckovic2017graph}, \textit{GCN} \citep{kipf2016semi}, \textit{MoNET} \citep{DBLP:journals/corr/abs-1909-11793}, \textit{LabelProp} \citep{article}, \textit{GraphSage(GS)}-variants \citep{DBLP:journals/corr/HamiltonYL17}. Given that the benchmark datasets exhibit balanced class distributions, test accuracy is adopted as the primary evaluation metric, reported in \textit{\textbf{Table \ref{tab:node_classification_combined}}}.

\begin{table*}
  \centering
  \footnotesize                      
  \caption{Performance comparison of SCGR against baseline models across six standard benchmark datasets (Graph-rewiring methods are omitted, as they report only graph-classification results on these benchmarks.)}
  \label{tab:node_classification_combined}
  \resizebox{\linewidth}{!}{
    \begin{tabular}{lcccccc}
      \toprule
      Model       & Am.\,Comp.             & Am.\,Photo             & CiteS.                 & Co.\,CS                & Cora                   & PubMed                 \\
      \midrule
      LogReg      & \(0.6410 \pm 0.0570\)  & \(0.7300 \pm 0.0650\)  & -                      & \(0.8640 \pm 0.0900\)  & -                      & -                      \\
      MLP         & \(0.4490 \pm 0.0580\)  & \(0.6960 \pm 0.0380\)  & \(0.5880 \pm 0.0220\)  & \(0.8830 \pm 0.0070\)  & \(0.5980 \pm 0.0240\)  & \(0.7010 \pm 0.0070\)  \\
      GAT         & \(0.7800 \pm 0.1900\)  & \(0.8570 \pm 0.2030\)  & \(0.6890 \pm 0.0170\) & \(0.9050 \pm 0.0060\)  & \(\mathbf{0.8080 \pm 0.0160}\) & \(0.7780 \pm 0.0210\)  \\
      GCN         & \(0.8260 \pm 0.0240\)  & \(0.9120 \pm 0.0120\) & \(0.6820 \pm 0.0160\)  & \(0.9111 \pm 0.0050\)  & \(0.7910 \pm 0.0180\)  & \(0.7880 \pm 0.0060\)  \\
      MoNET  & \(0.8350 \pm 0.0220\) & \(0.9120 \pm 0.0130\) & \(\mathbf{0.7120 \pm 0.0020}\) & \(0.9080 \pm 0.0600\) & \(0.5980 \pm 0.0080\) & \(0.7860 \pm 0.0230\)\\
      LabelProp & \(0.7080 \pm 0.0810\) & \(0.7260 \pm 0.0111\) & \(0.6780 \pm 0.0210\) & \(0.7360 \pm 0.0390\) & \(0.5050 \pm 0.0150\) & \(0.7050 \pm 0.0530\)\\
      LabelProp NL & \(0.7500 \pm 0.0390\) & \(0.8390 \pm 0.0270\) & \(0.6670 \pm 0.0220\) & \(0.7600 \pm 0.0140\) & \(0.5100 \pm 0.0100\) & \(0.7230 \pm 0.0290\)\\
      GS-mean & \(0.8240 \pm 0.0180\) & \(0.9140 \pm 0.0130\) & \(0.7160 \pm 0.0190\) & \(0.9130 \pm 0.0280\) & \(0.5860 \pm 0.0160\) & \(0.7740 \pm 0.0220\) \\
      GS-maxpool & \(-\) & \(0.9040 \pm 0.0130\) & \(0.6750 \pm 0.0230\) & \(0.8500 \pm 0.0110\) & \(0.4700 \pm 0.0150\) & \(0.7610 \pm 0.0230\) \\
      GS-meanpool & \(0.8960 \pm 0.0090\) & \(0.9070 \pm 0.0160\) & \(0.6860 \pm 0.0240\) & \(0.8960\pm 0.00090\) & \(0.4050 \pm 0.0150\) & \(0.7650 \pm 0.0240\)\\
      \midrule
 \textbf{SCGR + GCN}         & \(\mathbf{0.9031 \pm 0.0062}\) & \(\mathbf{0.9400 \pm 0.0026}\)  & \( 0.6180 \pm 0.0215 \)  & \(\mathbf{0.9211 \pm 0.0022}\) & \(0.7957 \pm 0.0058\)  & \(\mathbf{0.7894 \pm 0.0097}\) \\
      \bottomrule
    \end{tabular}}
\end{table*}

\emph{SCGR} achieves the highest accuracy on four of the six benchmark datasets, with notable improvements in \textit{Amazon Computers, Amazon Photo, Coauthor-CS }, \textit{PubMed} and is competent in \textit{Cora}. The consistent performance gains across most datasets, combined with reduced variance suggest that \emph{SCGR} provides a robust enhancement to existing GNN architectures. The method's effectiveness is particularly pronounced on the Amazon datasets and Computer Science, where the spectral properties and community structure align well with the Schreier-coset graph's expander properties, enabling more effective long-range information propagation during message passing. 
SCGR's weaker performance on \textit{CiteSeer} stems from the dataset's properties: high inter-to-intra-class edge ratio, sparse noisy features, and many isolated nodes, which make Fiedler-vector alignment unreliable and introduce misdirected coupling edges. This reflects a general limitation of spectral alignment rather than SCGR specifically.


\subsection{Graph Classification}
In Graph classification, we predict a single label for an entire graph by leveraging its structural information and associated node or edge features.
We use the \textbf{\textit{TU Dataset}} \citep{morris2020tudataset}, which comprises of over 120 graph classification and regression datasets. Representative datasets included in our experimentation are:  chemical graphs (MUTAG), protein structures (PROTEINS), social networks (IMDB-BINARY, REDDIT-BINARY), and research collaboration graphs (COLLAB). The topology of the graphs about the task is identified as requiring long-range interactions. \textit{SCGR} is compared against \textit{CGP} \citep{wilson2024cayley}, \textit{EGP} \citep{deac2022expander}, \textit{FA} \citep{alon1984eigenvalues}, \textit{DIGL} \citep{gasteiger2019diffusion}, \textit{SDRF} \citep{topping2021understanding}, \textit{FoSR} \citep{karhadkar2023fosr}, \textit{BORF} \citep{nguyen2023revisiting} and \textit{GTR} \citep{black2023understanding}.

Each model is trained with a  train/val/test split of 80\(\%\) /10 \(\%\)/10 \(\%\) and the following parameters \textit{epochs: 100, layers: 5, hidden dimension: 128, dropout: 0.5, weight decay: 0.0005}. Accuracy is the primary metric comparison.

Each experiment is run 20 times for statistical robustness.

\begin{table*}[t]
  \centering
  \caption{Results of SCGR compared against multiple models.  OOT indicates out-of-time and OOM points to out-of-memory error.
 The colors highlight \textcolor{red}{First}, \textcolor{blue}{Second} and \textcolor{gray}{Third} positions respectively.}
  \label{tab:tu_simplified_results}
  \resizebox{\textwidth}{!}{%
    \begin{tabular}{lcccccc}
      \toprule
      Model & REDDIT-BINARY & IMDB-BINARY & MUTAG & ENZYMES & PROTEINS & COLLAB \\
      \midrule
      GCN & \(77.735 \pm 1.586\) & \(60.500 \pm 2.729\) & \(74.750 \pm 4.030\) & \textcolor{blue}{\(29.083 \pm 2.363\)} & \(66.652 \pm 1.933\) & \(70.490 \pm 1.628\) \\
      + FA & OOM & \(48.950 \pm 1.652\) & \(70.250 \pm 4.608\) & \(28.667 \pm 3.693\) & \(71.071 \pm 1.506\) & \(72.039 \pm 0.771\) \\
      + DIGL & \(77.350 \pm 1.206\) & \(49.600 \pm 2.435\) & \(70.500 \pm 5.045\) & \(30.833 \pm 1.537\) & \textcolor{gray}{\(72.723 \pm 1.420\)} & \(56.470 \pm 0.865\) \\
      + SDRF & \(77.975 \pm 1.479\) & \(59.000 \pm 2.254\) & \(74.000 \pm 3.462\) & \(26.667 \pm 2.000\) & \(67.277 \pm 2.170\) & \(71.330 \pm 0.807\) \\
      + FoSR & \(77.750 \pm 1.385\) & \(59.750 \pm 2.357\) & \(75.250 \pm 5.722\) & \(24.167 \pm 3.005\) & \(70.848 \pm 1.618\) & \(67.220 \pm 1.367\) \\
      + BORF & OOT & \(48.900 \pm 0.900\) & \(76.750 \pm 0.037\) & \(27.833 \pm 0.029\) & \(67.411 \pm 0.016\) & OOT \\
      + GTR & \textcolor{gray}{\(79.025 \pm 1.248\)} & \textcolor{gray}{\(60.700 \pm 2.079\)} & \(76.500 \pm 4.189\) & \(25.333 \pm 2.931\) & \textcolor{blue}{\(72.991 \pm 1.956\)} & \textcolor{gray}{\(72.600 \pm 1.025\)} \\
      + PANDA & \textcolor{blue}{\(87.275 \pm 1.033\)} & \textcolor{red}{\(68.350 \pm 2.346\)} & \textcolor{gray}{\(76.750 \pm 5.531\)} & \(30.667 \pm 2.019\) & \(70.134 \pm 1.518\) & \textcolor{blue}{\(73.850 \pm 0.695\)} \\
      + EGP & \(67.550 \pm 1.200\) & \(59.700 \pm 2.371\) & \(70.500 \pm 4.738\) & \(27.583 \pm 3.262\) & \textcolor{red}{\(73.304 \pm 2.516\)} & \(69.470 \pm 0.970\) \\
      + CGP & \(67.050 \pm 1.483\) & \(56.200 \pm 1.825\) & \textcolor{red}{\(83.750 \pm 3.597\)} & \textcolor{gray}{\(31.000 \pm 2.397\)} & \(73.036 \pm 1.291\) & \(69.630 \pm 0.730\) \\
      \midrule
      + \textbf{SCGR} & \textcolor{red}{\(88.430 \pm 2.0600\)} & \textcolor{blue}{\(61.600 \pm 4.870\)} & \textcolor{blue}{\(77.890 \pm 0.0921 \)} & \textcolor{red}{\(53.830 \pm 0.0791\)} & \(72.590 \pm 4.330\) & \textcolor{red}{\(77.600\pm 4.350\)} \\
      \midrule
      GIN & \textcolor{gray}{\(84.600 \pm 1.454\)} & \textcolor{gray}{\(71.250 \pm 1.509\)} & \(80.500 \pm 5.143\) & \(35.667 \pm 2.803\) & \(70.312 \pm 1.749\) & \(71.490 \pm 0.746\) \\
      + FA & OOM & \(69.900 \pm 2.332\) & \(80.250 \pm 5.314\) & \textcolor{gray}{\(47.833 \pm 2.529\)} & \(72.902 \pm 1.419\) & \(72.740 \pm 0.786\) \\
      + DIGL & \(84.575 \pm 1.265\) & \(52.650 \pm 2.150\) & \(78.500 \pm 4.189\) & \(41.500 \pm 3.063\) & \textcolor{gray}{\(72.321 \pm 1.440\)} & \(57.620 \pm 1.010\) \\
      + SDRF & \(84.550 \pm 1.396\) & \(69.550 \pm 2.381\) & \(80.500 \pm 4.177\) & \(37.167 \pm 2.709\) & \(69.509 \pm 1.709\) & \(72.958 \pm 0.419\) \\
      + FoSR & \(85.750 \pm 1.099\) & \(69.250 \pm 1.810\) & \(80.500 \pm 4.738\) & \(28.083 \pm 2.301\) & \(71.518 \pm 1.767\) & \(71.720 \pm 0.892\) \\
      + BORF & OOT & \(70.700 \pm 0.018\) & \(79.250 \pm 0.038\) & \(34.167 \pm 0.029\) & \(70.625 \pm 0.017\) & OOT \\
      + GTR & \(85.474 \pm 0.826\) & \(69.550 \pm 1.473\) & \(79.000 \pm 3.847\) & \(31.750 \pm 2.466\) & \(72.054 \pm 1.510\) & \(71.849 \pm 0.710\) \\
      + PANDA & \textcolor{red}{\(90.325 \pm 0.867\)} & \(68.350 \pm 2.346\) & \textcolor{blue}{\(83.250 \pm 3.262\)} & \(42.167 \pm 2.286\) & \textcolor{blue}{\(72.321 \pm 1.786\)} & \textcolor{gray}{\(73.320 \pm 0.814\)} \\
      + EGP & \(77.875 \pm 1.563\) & \(68.250 \pm 1.121\) & \(81.500 \pm 4.696\) & \(40.667 \pm 3.095\) & \(70.848 \pm 1.568\) & \textcolor{blue}{\(72.330 \pm 0.954\)} \\
      + CGP & \(78.225 \pm 1.268\) & \textcolor{blue}{\(71.650 \pm 1.532\)} & \textcolor{red}{\(85.250 \pm 3.200\)} & \textcolor{blue}{\(50.083 \pm 2.242\)} & \(73.080 \pm 1.396\) & \textcolor{red}{\(73.350 \pm 0.788\)} \\
      \midrule
      + \textbf{SCGR} & \textcolor{blue}{\(86.200 \pm 2.780\)} & \textcolor{red}{\(71.700 \pm 4.450\)} & \textcolor{gray}{\(82.110 \pm 5.370\)} & \textcolor{red}{\(58.300 \pm 6.9700\)} & \textcolor{red}{\(74.290 \pm 3.8600\)} & \(67.880 \pm 2.4100\) \\ 
      \bottomrule
    \end{tabular}
  }
\end{table*}

In  \textit{\textbf{Table \ref{tab:tu_simplified_results}}} \textit{SCGR} consistently achieves strong performance across the \emph{TU Dataset} in both \emph{GCN + SCGR} and \emph{GIN + SCGR} configurations. Schreier-coset attains first place in six of the twelve configurations and shows competitive scores in ten datasets by being in the top 3. Notably in \textit{Enzymes} dataset, it  outperforms all models  in \textit{GCN + SCGR} and attains a significant gain of  \(\textit{8\%}\) in \textit{GIN + SCGR}. On \textit{REDDIT}, \textit{SCGR} attains first and second place respectively (vice-versa for IMDB). \textit{SCGR's} strong performance across diverse graphical datasets underscores its diverse applications.

\begin{figure}[ht] 
    \centering
    \includegraphics[width=0.75\linewidth]{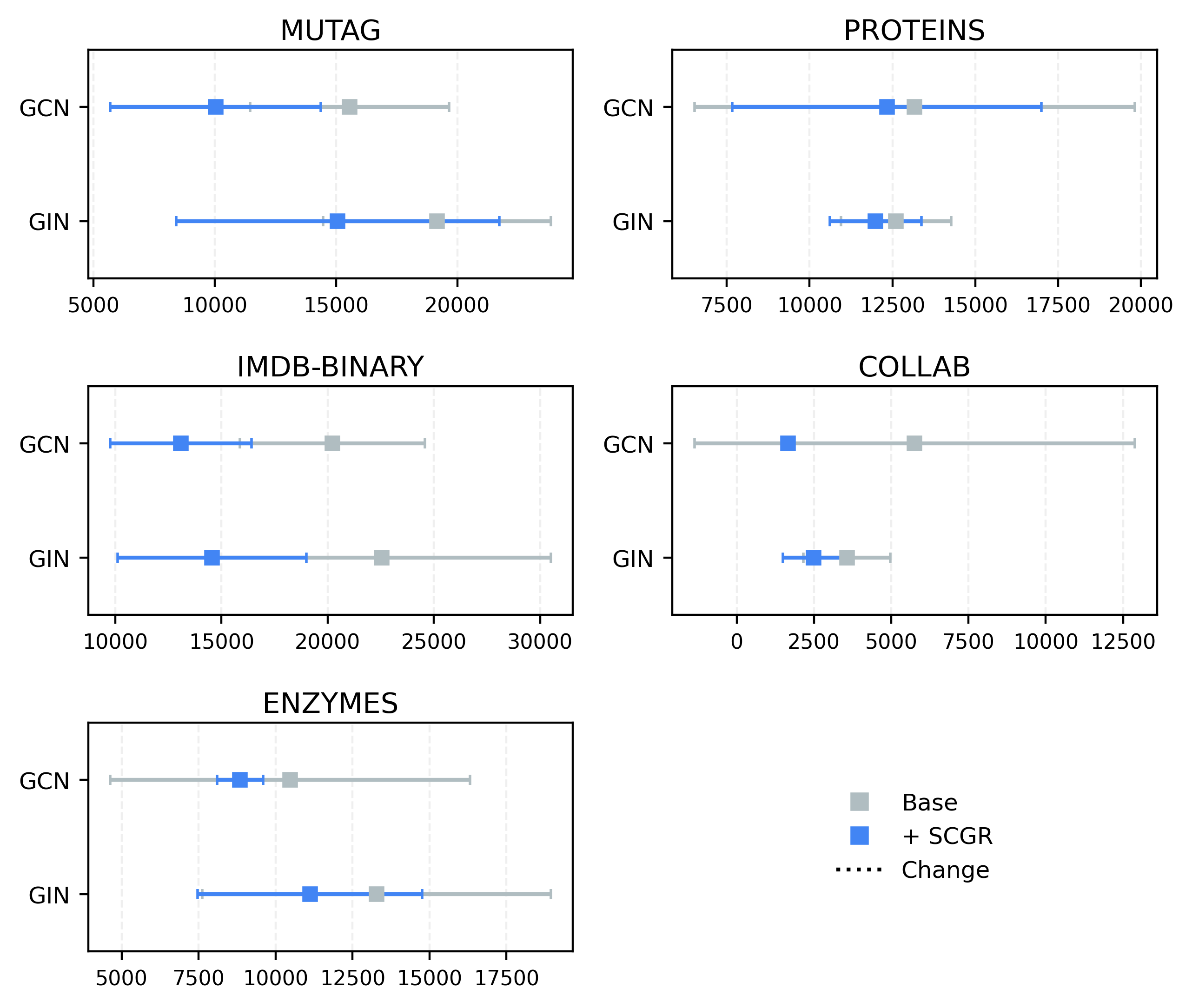}
    \caption{Effective Resistance (ER) across benchmarks: \textit{\textcolor{googleblue}{GCN/GIN + SCGR}} compared to \textcolor{gray}{GCN/GIN} baselines across varying graph complexities.}
    \label{fig:effective_resistance}
\end{figure}

\textbf{\textit{Figure \ref{fig:effective_resistance}}}, empirically validates consistent reductions in effective resistance across all benchmark datasets.
\textit{SCGR} achieves the most substantial improvements on IMDB-BINARY : \(41\%\) reduction, COLLAB: \(23-30\%\) reduction and MUTAG: \(34\%\) reduction, where long-range dependencies are particularly critical. Even on datasets with inherently good connectivity like PROTEINS, \textit{SCGR} still provides meaningful improvements. These results confirm that \textit{SCGR} successfully creates more efficient information propagation pathways, directly addressing the over-squashing. \textit{For ER values, \textbf{refer: \textit{Appendix. 8.5.2}}}

For, molecular prediction task, \textit{SCGR} is assessed on the \textbf{\textit{Open Graph Benchmark - Graph Prediction}} \textit{Ogbg-Molhiv} and \textit{Ogbg-Molpcba} datasets \cite{hu2020open}. The experimental protocol adheres to the implementation and hyperparameter configuration specified by \cite{hu2020open}.


 \textit{\textbf{Table 3, in Appendix. 8.5.4}} reports \textit{ROC-AUC\% }metrics on the Ogbg-Molhiv  and  \textit{Average Precision (AP)} Ogbg-Molpcba dataset. \textit{SCGR} exhibits robust predictive performance while maintaining high structural fidelity. Schreier-coset in both configurations attains highest ROC-AUC score in Molhiv dataset. For the Molpcba dataset, \textit{GCN+SCGR} attains the highest average precision, with \textit{GIN+SCGR} remaining competitive based on the inherent scale and structural complexity of Molpcba. 

 We further evaluate SCGR on the \textbf{\textit{Peptides-Func}} and \textbf{\textit{Peptides-Struct}} datasets from the Long Range Graph Benchmark, using test AP and MAE, respectively. SCGR achieves the best performance across both GCN and GIN, with GIN+SCGR improving AP by (13.4\%) and reducing MAE by (9.2\%) relative to the strongest baseline. Full results are reported in \textbf{\textit{Appendix 8.5.5}}

\subsection{Graph Modularity}

 Using the Stochastic Block Models (SBM) \cite{lee2019review} with 50 equal communities (1000 nodes). Intra- and inter-community edge probabilities ($p_{in}, p_{out}$) are varied to control modularity, with $p_{in} > p_{out}$ ensuring meaningful structure. This design enables systematic analysis of SCGR performance in weak-to-strong community regimes. The classification task involves predicting community membership, directly testing the model’s ability to capture long-range dependencies.

\begin{figure}[ht] 
    \centering
    \includegraphics[width=0.8\linewidth]{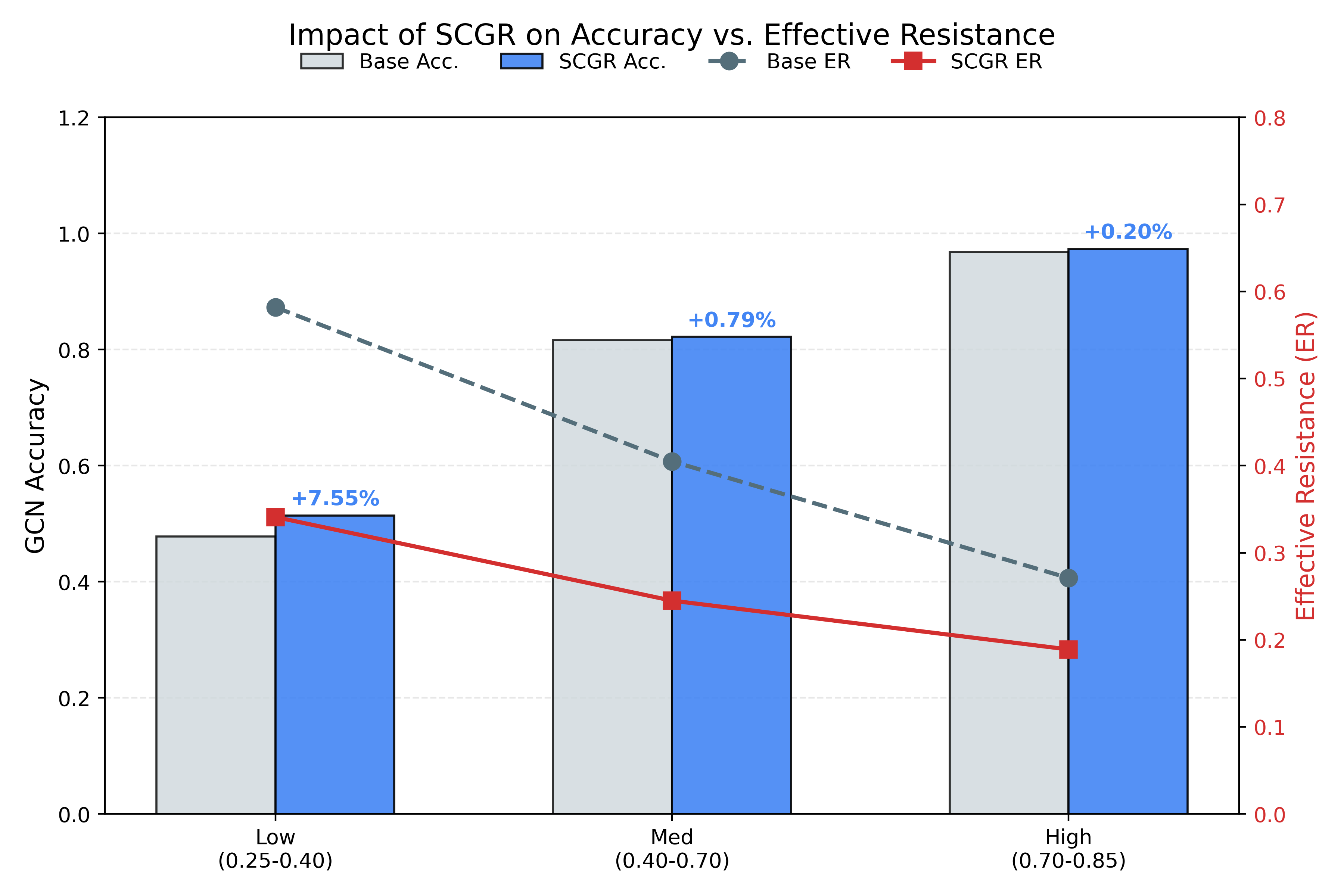}
    \caption{Comparison of effective resistance across different SBM modularities.}
    \label{fig:sbm}
\end{figure}

\textbf{\textit{Figure \ref{fig:sbm}} }reports an inverse correlation between graph modularity and \textit{SCGR} improvements. The largest gains occur in low-modularity graphs, where weak community structure induces connectivity bottlenecks with a reduction of \(41.1\%\) in effective resistance. Gains decrease with increasing modularity (+0.79\% medium, +0.20\% high), as classification becomes dominated by local neighborhood information. SCGR continues to provide a low resistance pathway. SCGR is most effective in regimes requiring long-range information propagation, where standard message passing is limited by over-squashing. (\textit{refer \textbf{\textit{Appendix. 8.5.3}} in the Appendix for ER value for varying modularity.})

\section{Ablation Study}
We validate SCGR's three design pillars through systematic ablations on graph classification (\textbf{\textit{Appendix. 8.6.1}}) and node classification (\textbf{\textit{Appendix. 8.6.2}}). \textbf{(1) Expander structure:} Replacing the $\text{SL}(2,\mathbb{Z}_n)$ Schreier-Coset graph with a random 4-regular graph degrades accuracy by $5.26\%$ on MUTAG and $2.8\%$ on Amazon Photo, confirming that algebraic symmetry provides guarantees beyond mere connectivity. \textbf{(2) Spectral alignment:} Random node-to-coset mapping under-performs Fiedler-based alignment by $6.05\%$ on MUTAG and $2.9\%$ on Amazon Photo, on COLLAB, spectral alignment achieves $71.1\%$ lower $R_{\text{eff}}$ than random mapping. \textbf{(3) Coupling strength:} Performance peaks at $\epsilon=1.0$ across both tasks---on Amazon Photo, $\epsilon=1.0$ achieves $94.0\%$ versus $92.3\%$ at $\epsilon=0.1$.

\section{Conclusion}
We introduce Schreier-Coset Graph Rewiring \textit{(SCGR)}, principled graph-rewiring framework that alleviates over-squashing in GNNs. Using the spectral properties of Schreier-Coset graph derived from \(SL(2,\mathbb{Z}_n)\) we have established an uniform upper bound on effective resistance. \(R_{\mathrm{eff}}^{\Gamma}(u,v) \leq \frac{2}{d\gamma_0}\). As a result, distant node pairs-most affected by over-squashing experience the largest reductions. 
Empirically, \textit{SCGR }reduces effective resistance by \(15-40\%\) across benchmark datasets while maintaining a competitive accuracy. In SBM graphs, where modularity varies, \textit{SCGR} gains are the strongest in lower-modularity graphs \(+7.55\%\) and remain stable in highly modular paradigms. \textit{SCGR} achieves these improvement with complexity of \(|E_{in}|+|V_{in}|\) offering a theoretically rich solution to over-squashing. From our ablation, we conclude \(\varepsilon = 1.0\) is effective to capture local and global properties, further iterations could incorporate a differentiable \(\varepsilon\) or optimization to dynamically tailor the communication layer to specific topological bottlenecks in real time. 

\bibliographystyle{plainnat}
\bibliography{reference}

@article{harary1969graph,
  title={Graph theory addison-wesley reading ma usa},
  author={Harary, F},
  journal={HARTIGAN, JA: Clustering Algorithm},
  year={1969}
}

@book{diestel2012graph,
  title={Graph theory: Springer graduate text gtm 173},
  author={Diestel, Reinhard},
  volume={173},
  year={2012},
  publisher={Reinhard Diestel}
}

@article{shuman2013emerging,
  title={The emerging field of signal processing on graphs: Extending high-dimensional data analysis to networks and other irregular domains},
  author={Shuman, David I and Narang, Sunil K and Frossard, Pascal and Ortega, Antonio and Vandergheynst, Pierre},
  journal={IEEE signal processing magazine},
  volume={30},
  number={3},
  pages={83--98},
  year={2013},
  publisher={IEEE}
}

@article{hamilton2017inductive,
  title={Inductive representation learning on large graphs},
  author={Hamilton, Will and Ying, Zhitao and Leskovec, Jure},
  journal={Advances in neural information processing systems},
  volume={30},
  year={2017}
}

@article{battaglia2018relational,
  title={Relational inductive biases, deep learning, and graph networks},
  author={Battaglia, Peter W and Hamrick, Jessica B and Bapst, Victor and Sanchez-Gonzalez, Alvaro and Zambaldi, Vinicius and Malinowski, Mateusz and Tacchetti, Andrea and Raposo, David and Santoro, Adam and Faulkner, Ryan and others},
  journal={arXiv preprint arXiv:1806.01261},
  year={2018}
}

@inproceedings{sanchez2018graph,
  title={Graph networks as learnable physics engines for inference and control},
  author={Sanchez-Gonzalez, Alvaro and Heess, Nicolas and Springenberg, Jost Tobias and Merel, Josh and Riedmiller, Martin and Hadsell, Raia and Battaglia, Peter},
  booktitle={International conference on machine learning},
  pages={4470--4479},
  year={2018},
  organization={PMLR}
}

@inproceedings{gilmer2017neural,
  title={Neural message passing for quantum chemistry},
  author={Gilmer, Justin and Schoenholz, Samuel S and Riley, Patrick F and Vinyals, Oriol and Dahl, George E},
  booktitle={International conference on machine learning},
  pages={1263--1272},
  year={2017},
  organization={Pmlr}
}

@article{koh2024physicochemical,
  title={Physicochemical graph neural network for learning protein--ligand interaction fingerprints from sequence data},
  author={Koh, Huan Yee and Nguyen, Anh TN and Pan, Shirui and May, Lauren T and Webb, Geoffrey I},
  journal={Nature Machine Intelligence},
  volume={6},
  number={6},
  pages={673--687},
  year={2024},
  publisher={Nature Publishing Group UK London}
}

@article{berg2017graph,
  title={Graph convolutional matrix completion},
  author={Berg, Rianne van den and Kipf, Thomas N and Welling, Max},
  journal={arXiv preprint arXiv:1706.02263},
  year={2017}
}

@inproceedings{jiang2019semi,
  title={Semi-supervised learning with graph learning-convolutional networks},
  author={Jiang, Bo and Zhang, Ziyan and Lin, Doudou and Tang, Jin and Luo, Bin},
  booktitle={Proceedings of the IEEE/CVF conference on computer vision and pattern recognition},
  pages={11313--11320},
  year={2019}
}

@article{kipf2016semi,
  title={Semi-supervised classification with graph convolutional networks},
  author={Kipf, TN},
  journal={ICLR,2017 arXiv:1609.02907},
  year={2017}
}

@article{velivckovic2017graph,
  title={Graph attention networks},
  author={Veli{\v{c}}kovi{\'c}, Petar and Cucurull, Guillem and Casanova, Arantxa and Romero, Adriana and Lio, Pietro and Bengio, Yoshua},
  journal={ICLR 2018, arXiv:1710.10903},
  year={2017}
}

@article{he2023message,
  title={Message passing meets graph neural networks: A new paradigm for massive MIMO systems},
  author={He, Hengtao and Yu, Xianghao and Zhang, Jun and Song, Shenghui and Letaief, Khaled B},
  journal={IEEE Transactions on Wireless Communications},
  volume={23},
  number={5},
  pages={4709--4723},
  year={2023},
  publisher={IEEE}
}

@article{alon2020bottleneck,
  title={On the bottleneck of graph neural networks and its practical implications},
  author={Alon, Uri and Yahav, Eran},
  journal={ICLR 2021, arXiv:2006.05205},
  year={2021}
}

@inproceedings{deac2022expander,
  title={Expander graph propagation},
  author={Deac, Andreea and Lackenby, Marc and Veli{\v{c}}kovi{\'c}, Petar},
  booktitle={Learning on Graphs Conference},
  pages={38--1},
  year={2022},
  organization={PMLR}
}

@article{wilson2024cayley,
  title={Cayley graph propagation},
  author={Wilson, JJ and Bechler-Speicher, Maya and Veli{\v{c}}kovi{\'c}, Petar},
  journal={In Proceedings of the Third Learning on Graphs Conference, arXiv:2410.03424},
  year={2024}
}

@inproceedings{fesser2024mitigating,
  title={Mitigating over-smoothing and over-squashing using augmentations of forman-ricci curvature},
  author={Fesser, Lukas and Weber, Melanie},
  booktitle={Learning on Graphs Conference},
  pages={19--1},
  year={2024},
  organization={PMLR}
}

@article{karhadkar2023fosr,
  title={FoSR: First-order spectral rewiring for addressing oversquashing in GNNs},
  author={Karhadkar, Kedar and Banerjee, Pradeep Kr and Mont{\'u}far, Guido},
  journal={ICLR 2023, arXiv:2210.11790},
  year={2023}
}

@inproceedings{black2023understanding,
  title={Understanding oversquashing in gnns through the lens of effective resistance},
  author={Black, Mitchell and Wan, Zhengchao and Nayyeri, Amir and Wang, Yusu},
  booktitle={International Conference on Machine Learning},
  pages={2528--2547},
  year={2023},
  organization={PMLR}
}

@inproceedings{attali2024delaunay,
  title={Delaunay graph: Addressing over-squashing and over-smoothing using delaunay triangulation},
  author={Attali, Hugo and Buscaldi, Davide and Pernelle, Nathalie},
  booktitle={Forty-first International Conference on Machine Learning},
  year={2024}
}

@inproceedings{li2018deeper,
  title={Deeper insights into graph convolutional networks for semi-supervised learning},
  author={Li, Qimai and Han, Zhichao and Wu, Xiao-Ming},
  booktitle={Proceedings of the AAAI conference on artificial intelligence},
  volume={32},
  number={1},
  year={2018}
}

@article{liang2023predicting,
  title={Predicting global label relationship matrix for graph neural networks under heterophily},
  author={Liang, Langzhang and Hu, Xiangjing and Xu, Zenglin and Song, Zixing and King, Irwin},
  journal={Advances in Neural Information Processing Systems},
  volume={36},
  pages={10909--10921},
  year={2023}
}

@article{liang2023tackling,
  title={Tackling long-tailed distribution issue in graph neural networks via normalization},
  author={Liang, Langzhang and Xu, Zenglin and Song, Zixing and King, Irwin and Qi, Yuan and Ye, Jieping},
  journal={IEEE Transactions on Knowledge and Data Engineering},
  volume={36},
  number={5},
  pages={2213--2223},
  year={2023},
  publisher={IEEE}
}

@article{arnaiz2025oversmoothing,
  title={Oversmoothing," Oversquashing", Heterophily, Long-Range, and more: Demystifying Common Beliefs in Graph Machine Learning},
  author={Arnaiz-Rodriguez, Adrian and Errica, Federico},
  journal={Neurips 2025 arXiv:2505.15547},
  year={2025}
}

@article{shchur2018pitfalls,
  title={Pitfalls of graph neural network evaluation},
  author={Shchur, Oleksandr and Mumme, Maximilian and Bojchevski, Aleksandar and G{\"u}nnemann, Stephan},
  journal={arXiv preprint arXiv:1811.05868},
  year={2018}
}

@article{sen2008collective,
  title={Collective classification in network data},
  author={Sen, Prithviraj and Namata, Galileo and Bilgic, Mustafa and Getoor, Lise and Galligher, Brian and Eliassi-Rad, Tina},
  journal={AI magazine},
  volume={29},
  number={3},
  pages={93--93},
  year={2008}
}

@article{werbos1974beyond,
  title={Beyond regression: New tools for prediction and analysis in the behavioral sciences},
  author={Werbos, Paul},
  journal={PhD thesis, Committee on Applied Mathematics, Harvard University, Cambridge, MA},
  year={1974}
}

@article{gasteiger2019diffusion,
  title={Diffusion improves graph learning},
  author={Gasteiger, Johannes and Wei{\ss}enberger, Stefan and G{\"u}nnemann, Stephan},
  journal={Advances in neural information processing systems},
  volume={32},
  year={2019}
}

@inproceedings{nguyen2023revisiting,
  title={Revisiting over-smoothing and over-squashing using ollivier-ricci curvature},
  author={Nguyen, Khang and Hieu, Nong Minh and Nguyen, Vinh Duc and Ho, Nhat and Osher, Stanley and Nguyen, Tan Minh},
  booktitle={International Conference on Machine Learning},
  pages={25956--25979},
  year={2023},
  organization={PMLR}
}

@article{chapelle2009semi,
  title={Semi-supervised learning (chapelle, o. et al., eds.; 2006)[book reviews]},
  author={Chapelle, Olivier and Scholkopf, Bernhard and Zien, Alexander},
  journal={IEEE Transactions on Neural Networks},
  volume={20},
  number={3},
  pages={542--542},
  year={2009},
  publisher={IEEE}
}

@inproceedings{alon1984eigenvalues,
  title={Eigenvalues, expanders and superconcentrators},
  author={Alon, Noga and Milman, Vitali D},
  booktitle={25th Annual Symposium onFoundations of Computer Science, 1984.},
  pages={320--322},
  year={1984},
  organization={IEEE}
}

@article{topping2021understanding,
  title={Understanding over-squashing and bottlenecks on graphs via curvature},
  author={Topping, Jake and Di Giovanni, Francesco and Chamberlain, Benjamin Paul and Dong, Xiaowen and Bronstein, Michael M},
  journal={arXiv preprint arXiv:2111.14522},
  year={2021}
}

@inproceedings{eliasof2023graph,
  title={Graph positional encoding via random feature propagation},
  author={Eliasof, Moshe and Frasca, Fabrizio and Bevilacqua, Beatrice and Treister, Eran and Chechik, Gal and Maron, Haggai},
  booktitle={International Conference on Machine Learning},
  pages={9202--9223},
  year={2023},
  organization={PMLR}
}

@article{jamadandi2024spectral,
  title={Spectral graph pruning against over-squashing and over-smoothing},
  author={Jamadandi, Adarsh and Rubio-Madrigal, Celia and Burkholz, Rebekka},
  journal={Advances in Neural Information Processing Systems},
  volume={37},
  pages={10348--10379},
  year={2024}
}

@article{braess1968paradoxon,
  title={{\"U}ber ein Paradoxon aus der Verkehrsplanung},
  author={Braess, Dietrich},
  journal={Unternehmensforschung},
  volume={12},
  number={1},
  pages={258--268},
  year={1968},
  publisher={Springer}
}

@article{qian2023probabilistically,
  title={Probabilistically rewired message-passing neural networks},
  author={Qian, Chendi and Manolache, Andrei and Ahmed, Kareem and Zeng, Zhe and Broeck, Guy Van den and Niepert, Mathias and Morris, Christopher},
  journal={In ICLR arXiv:2310.02156},
  year={2024}
}

@article{choi2024panda,
  title={Panda: Expanded width-aware message passing beyond rewiring},
  author={Choi, Jeongwhan and Park, Sumin and Wi, Hyowon and Cho, Sung-Bae and Park, Noseong},
  journal={IN ICLR arXiv:2406.03671},
  year={2024}
}

@article{finkelshtein2024cooperative,
  title={Cooperative graph neural networks},
  author={Finkelshtein, Ben and Huang, Xingyue and Bronstein, Michael and Ceylan, Ismail Ilkan},
  journal={In ICML arXiv:2310.01267},
  year={2024}
}

@article{lee2019review,
  title={A review of stochastic block models and extensions for graph clustering},
  author={Lee, Clement and Wilkinson, Darren J},
  journal={Applied Network Science},
  volume={4},
  number={1},
  pages={1--50},
  year={2019},
  publisher={Springer}
}

@article{hu2020open,
  title={Open graph benchmark: Datasets for machine learning on graphs},
  author={Hu, Weihua and Fey, Matthias and Zitnik, Marinka and Dong, Yuxiao and Ren, Hongyu and Liu, Bowen and Catasta, Michele and Leskovec, Jure},
  journal={Advances in neural information processing systems},
  volume={33},
  pages={22118--22133},
  year={2020}
}

@article{cai2023effective,
  title={Effective Resistances in Non-Expander Graphs},
  author={Cai, Dongrun and Chen, Xue and Peng, Pan},
  journal={arXiv preprint arXiv:2307.01218},
  year={2023}
}

@article{morris2020tudataset,
  title={Tudataset: A collection of benchmark datasets for learning with graphs},
  author={Morris, Christopher and Kriege, Nils M and Bause, Franka and Kersting, Kristian and Mutzel, Petra and Neumann, Marion},
  journal={arXiv preprint arXiv:2007.08663},
  year={2020}
}

@misc{rampášek2023recipegeneralpowerfulscalable,
      title={Recipe for a General, Powerful, Scalable Graph Transformer}, 
      author={Ladislav Rampášek and Mikhail Galkin and Vijay Prakash Dwivedi and Anh Tuan Luu and Guy Wolf and Dominique Beaini},
      year={2023},
      eprint={2205.12454},
      archivePrefix={arXiv},
      primaryClass={cs.LG},
      url={https://arxiv.org/abs/2205.12454}, 
}

@article{bourgain2008uniform,
  title={Uniform expansion bounds for Cayley graphs of},
  author={Bourgain, Jean and Gamburd, Alex},
  journal={Annals of Mathematics},
  pages={625--642},
  year={2008},
  publisher={JSTOR}
}

@article{hoory2006expander,
  title={Expander graphs and their applications},
  author={Hoory, Shlomo and Linial, Nathan and Wigderson, Avi},
  journal={Bulletin of the American Mathematical Society},
  volume={43},
  number={4},
  pages={439--561},
  year={2006}
}

@article{breuillard2015expansion,
  title={Expansion in finite simple groups of Lie type},
  author={Breuillard, Emmanuel and Green, Ben J and Guralnick, Robert M and Tao, Terence},
  journal={Journal of the European Mathematical Society},
  volume={17},
  number={6},
  pages={1367--1434},
  year={2015}
}

@article{DBLP:journals/corr/abs-1909-11793,
  author       = {John Palowitch and
                  Bryan Perozzi},
  title        = {{MONET:} Debiasing Graph Embeddings via the Metadata-Orthogonal Training
                  Unit},
  journal      = {CoRR},
  volume       = {abs/1909.11793},
  year         = {2019},
  url          = {http://arxiv.org/abs/1909.11793},
  eprinttype   = {arXiv},
  eprint       = {1909.11793},
  bibsource    = {dblp computer science bibliography, https://dblp.org}
}

@article{article,
author = {Zhu, Xiaojin and Ghahramani, Zoubin},
year = {2003},
month = {07},
pages = {},
title = {Learning from Labeled and Unlabeled Data with Label Propagation}
}

@article{DBLP:journals/corr/HamiltonYL17,
  author       = {William L. Hamilton and
                  Rex Ying and
                  Jure Leskovec},
  title        = {Inductive Representation Learning on Large Graphs},
  journal      = {CoRR},
  volume       = {abs/1706.02216},
  year         = {2017},
  url          = {http://arxiv.org/abs/1706.02216},
  eprinttype   = {arXiv},
  eprint       = {1706.02216},
  bibsource    = {dblp computer science bibliography, https://dblp.org}
}
\newpage

\section{Appendix.}
\subsection{Summary}
The \textbf{\emph{Appendix}} contains mathematical formulation of the Schreier-Coset Graphs. 
\begin{enumerate}
    \item \textbf{\textit{Section 8.2:}} Preliminaries and Notations.
    \item \textbf{\textit{Section 8.3:}} Contains proofs for:
    \begin{enumerate}
        \item \textbf{\textit{Lemma 4.1}} - Spectral Gap as \emph{\textbf{A.8.3, Lemma 8.1}}
        \item \textbf{\textit{Lemma 4.2}} - Expander Mixing as \emph{\textbf{A.8.3, Lemma 8.2}}
        \item \textbf{\textit{Lemma 4.3}} - Effective Resistance Bound as \emph{\textbf{A.8.3, Lemma 8.3}}
        \item \textbf{\textit{Theorem 4.4}} - Lipschitz Locality as \emph{\textbf{A.8.3, Theorem 8.4}}
        \item \textbf{\textit{Theorem 4.5}} - Effective Resistance in Rewired Graph \textbf{\(\Gamma\)} as \emph{\textbf{A.8.3, Theorem 8.5}}
        \item \textbf{\textit{Theorem 4.6}} - Over-squashing Mitigation as \emph{\textbf{A.8.3, C.8.6}}
    \end{enumerate}
    \item \textbf{\textit{Section 8.4:}} Algorithm Functions
    \begin{enumerate}
        \item \textbf{\textit{8.4.1}}- \textsc{FindN}
        \item \textbf{\textit{8.4.2}}- \textsc{FiedlerRanking}
        \item \textbf{\textit{8.4.3}}- \textsc{LiftFeatures}
        \item \textbf{\textit{8.4.4}}- \textsc{SchreierTransform}
    \end{enumerate}
    \item \textbf{\textit{Section 8.5:}} Experiments
    \begin{enumerate}
        \item \textbf{\textit{8.5.1}}- Hyper-parameters
        \item \textbf{\textit{8.5.2}}- Effective Resistance: Benchmark Datasets 
        \item \textbf{\textit{8.5.3}}- Effective Resistance: Stochastic Block Models
        \item \textbf{\textit{8.5.4}}- OGBG - Molhiv, OGBG - Molpcba
        
        \item \textbf{\textit{8.5.5}}- LRGB Peptides Struct and  Peptides Func.
    \end{enumerate}
    \item \textbf{\textit{Section 8.6:}} Ablation Studies
    \begin{enumerate}
        \item \textbf{\textit{Section 8.6.1}}- Node Classification
        \item \textbf{\textit{Section 8.6.2}}- Graph Classification
    \end{enumerate}
\end{enumerate}

\subsection{Preliminaries and Notations}
Let $\mathcal{G}$ be a finitely generated group with identity $e$, $H \subseteq \mathcal{G}$ a subgroup, and $\mathbb{S} \subseteq \mathcal{G}$ a symmetric generating set ($s \in \mathbb{S} \Rightarrow s^{-1} \in \mathbb{S}$). The quotient space $\mathcal{G}/H = \{gH : g \in \mathcal{G}\}$ consists of right cosets.
\paragraph{The Schreier-Coset Graph.}
The Schreier-coset graph $\Gamma = (V_\Gamma, E_\Gamma)$ is defined by:
\begin{itemize}
    \item $V_\Gamma = \{gH : g \in \mathcal{G}\}$
    \item $E_\Gamma = \{\{gH, (gs)H\} : gH \in V_\Gamma, s \in \mathbb{S}\}$
\end{itemize}
 $\Gamma$ is $d$-regular with $d = |\mathbb{S}|$ 
\paragraph{Spectral Properties of the Schreier-Coset Graph}
Let $A_\Gamma$ denote the adjacency matrix of $\Gamma$, $D_\Gamma = dI$ the degree matrix, $L_\Gamma = D_\Gamma - A_\Gamma$ the Laplacian, and the transition matrix $P = \frac{1}{d}A_\Gamma$.

\subsection{Theoretical Formulation and Proofs}
\subsubsection{Uniform Spectral Gap}

\begin{lemma}[Uniform Spectral Gap]\label{lem:spectral-gap}
    For primes $p \geq 3$, let $\mathcal{G}_p=\operatorname{SL}(2,\mathbb{F}_p)$ and let $\mathbb{S}_p\subseteq \mathcal{G}_p$ be the generating set $\mathbb{S}_p = \left\{ \begin{pmatrix} 1 & \pm1 \\ 0 & 1 \end{pmatrix}, \begin{pmatrix} 1 & 0 \\ \pm1 & 1 \end{pmatrix}\right\}$, and $H_p \leq \mathcal{G}_p$ be the diagonal determinant-one subgroup. Then there exists $\gamma_0>0$ independent of $p$ such that the Schreier-coset graphs $\Gamma_p$ on the left cosets $\mathcal{G}_p/H_p$ satisfy $$1-\max_{\lambda \neq 1} |\lambda(P_{\Gamma_p})|\geq \gamma_0,$$
    where $P_{\Gamma_p}$ is the transition matrix of the random walk on $\Gamma_p$ and the maximum ranges over its eigenvalues $\lambda \neq 1$.
\end{lemma}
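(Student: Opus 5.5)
The plan is to deduce the statement from a uniform bound for the Cayley graphs $\mathrm{Cay}(\mathcal{G}_p;\mathbb{S}_p)$, where such bounds are classical, and then push it down to the coset graph $\Gamma_p$. The edge rule $gH\mapsto (sg)H$ uses left multiplication, which is well defined on left cosets. So a function $f$ on $\mathcal{G}_p/H_p$ lifts to the right-$H_p$-invariant function $F(g)=f(gH_p)$ on $\mathcal{G}_p$. One checks directly that $P_{\mathrm{Cay}}F(g)=\frac1d\sum_{s\in\mathbb{S}_p}F(sg)$ is again right-$H_p$-invariant and equals the lift of $P_{\Gamma_p}f$. The lift map is injective and intertwines the two operators, so $\operatorname{spec}(P_{\Gamma_p})\subseteq\operatorname{spec}(P_{\mathrm{Cay}})$.

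The constant function lifts to the constant function. Moreover, $\Gamma_p$ is connected because $\mathbb{S}_p$ generates $\mathcal{G}_p$, so the eigenvalue $1$ is simple on both sides. It therefore suffices to show that the Cayley graphs satisfy $1-\max_{\lambda\neq1}|\lambda(P_{\mathrm{Cay}})|\ge\gamma_0$ uniformly in $p$.

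\textbf{Upper end of the spectrum.} The generators $\begin{pmatrix}1&1\\0&1\end{pmatrix}$ and $\begin{pmatrix}1&0\\1&1\end{pmatrix}$ generate $\mathrm{SL}(2,\mathbb{Z})$, and $\mathcal{G}_p$ is the quotient by the principal congruence subgroup $\Gamma(p)$. Selberg's $3/16$ theorem gives $\lambda_1(\Gamma(p)\backslash\mathbb{H})\ge 3/16$. By the standard transfer to combinatorial Laplacians (Lubotzky's argument; alternatively, cite \citet{bourgain2008uniform} or \citet{breuillard2015expansion}, which give the bound directly), this yields $1-\lambda_2(P_{\mathrm{Cay}})\ge\gamma_1>0$ uniformly in $p$.

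\textbf{Lower end of the spectrum.} I expect this to be the main obstacle. Expansion controls $\lambda_2$ but not $\lambda_{\min}$, and the shortest obvious odd cycles (coming from $x^p=1$) have length growing with $p$. The argument uses the fact that $-1$ lies in the spectrum of a connected Cayley graph exactly when there is a character $\chi:\mathcal{G}_p\to\{\pm1\}$ with $\chi(s)=-1$ for all $s\in\mathbb{S}_p$. For $p\ge5$, the group $\mathrm{SL}(2,\mathbb{F}_p)$ is perfect, so no such character exists. The quantitative version I would invoke is the result of \citet{breuillard2015expansion} (their appendix on the bottom of the spectrum): for a Cayley graph whose group has no index-two subgroup, $1+\lambda_{\min}\ge c\,(1-\lambda_2)$ for an absolute constant $c>0$. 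Combined with the upper-end bound, this gives $1+\lambda_{\min}(P_{\mathrm{Cay}})\ge c\gamma_1$ uniformly in $p\ge5$.

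\textbf{Conclusion.} The case $p=3$ involves a single finite graph. Since $\Gamma_3$ is connected, it suffices to check that $\Gamma_3$ is non-bipartite, which is a finite computation; its own gap is then positive. Finally, set $\gamma_0=\min\{\gamma_1,\,c\gamma_1,\,\text{gap of }\Gamma_3\}$. By the spectral inclusion from the first paragraph, every eigenvalue $\lambda\neq1$ of $P_{\Gamma_p}$ is also an eigenvalue of $P_{\mathrm{Cay}}$ different from $1$, so $|\lambda|\le1-\gamma_0$, which proves the statement.
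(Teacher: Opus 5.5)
Your proposal is correct and follows essentially the paper's argument. Both establish uniform one-sided expansion of $\mathrm{Cay}(\mathcal{G}_p;\mathbb{S}_p)$, upgrade it to two-sided expansion via the Breuillard--Green--Guralnick--Tao appendix result using the absence of index-two subgroups, treat small $p$ by a finite non-bipartiteness check, and pass to $\Gamma_p$ by spectral inclusion. The differences are minor: you prove that inclusion directly via right-$H_p$-invariant lifts where the paper cites Hoory--Linial--Wigderson, and you use Selberg's theorem and perfectness of $\mathrm{SL}(2,\mathbb{F}_p)$ for $p\ge 5$ where the paper uses Bourgain--Gamburd and its odd-order-generator argument. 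One small correction: the BGGT result gives a lower bound on $1+\lambda_{\min}$ that depends on the expansion constant and the number of generators, not a linear bound $c(1-\lambda_2)$ with an absolute $c$. This still gives the uniform constant you need, because $|\mathbb{S}_p|=4$ is fixed and $\gamma_1$ is uniform in $p$.
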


\begin{proof}
For sufficiently large primes $p$, the Cayley graphs of $\mathcal{G}_p=\mathrm{SL}(2,\mathbb{F}_p)$ with generating set $\mathbb{S}_p$ form a family of (combinatorial) expanders \citep{bourgain2008uniform}. In other words, there exists $p_0$ such that for all $p\geq p_0$, there exists $\tilde{\gamma}>0$ independent of $p$ such that
$$\lambda_2(P_{\mathrm{Cay}(\mathcal{G}_p,\mathbb{S}_p)}) \leq 1-\tilde{\gamma}.$$
Also note that for odd primes, there are no subgroups of index-2 of $\operatorname{SL}(2,\mathbb{F}_p)$: any index-2 subgroup of $\operatorname{SL}(2,\mathbb{F}_p)$ would have to be normal and hence induce a surjective homomorphism $\varphi: \operatorname{SL}(2,\mathbb{F}_p)\to \mathbb{Z}/2\mathbb{Z}$. However, $\operatorname{SL}(2,\mathbb{F}_p)$ is generated by the matrices $g= \begin{pmatrix}
    1 & a\\ 0 & 1
\end{pmatrix}$ and 
$h=\begin{pmatrix}
    1 & 0\\ a & 1
\end{pmatrix}$ for $a \in \mathbb{F}_p$. For each $a \neq 0$, $g$ and $h$ have order $p$, and $p$ is invertible modulo 2, so
$$\varphi(I)=\varphi(g^p)=p\varphi(g)=0 \implies \varphi(g)=0,$$
and similarly for $\varphi(h)$. Hence $\varphi$ must be trivial. Thus no such index-2 subgroups can exist.

Proposition E.1 in \cite{breuillard2015expansion} states that for a finite group $G$, combinatorially expanding and no index-2 subgroups implies a two-sided expansion, i.e. there exists $\tilde{\gamma}_0 >0$ depending on the expander coefficient and number of generators $k=2$ such that
$$\max_{\lambda \neq 1} |\lambda(P_{\mathrm{Cay}(\mathcal{G}_p,\mathbb{S}_p)})| \leq 1-\tilde{\gamma}_0.$$
Since $\tilde{\gamma}$ does not depend on $p$, this bound holds uniformly over the family of Cayley graphs of $\mathcal{G}_p$ generated by $\mathbb{S}_p$ for $p \geq p_0$. 

The fact that there are no index-2 subgroups of the groups $\mathcal{G}_p$ implies that the (connected) Cayley graphs $\mathrm{Cay}(\mathcal{G}_p,\mathbb{S}_p)$ are non-bipartite (see Lemma 4 in \citep{van2021bipartite}). In particular, the eigenvalues of the transition matrices of $P_{\mathrm{Cay}(\mathcal{G}_p,\mathbb{S}_p)}$ are strictly greater than -1. Then for each $3\leq p<p_0$, there exists $\gamma_p >0$ such that $$\max_{\lambda \neq 1} |\lambda(P_{\mathrm{Cay}(\mathcal{G}_p,\mathbb{S}_p)})| \leq 1-\gamma_p.$$
Let 
$$\gamma_0=\min\left\{ \min_{3 \leq p <p_0} \gamma_p,\tilde{\gamma_0}\right\}>0.$$
Then the uniform spectral gap holds for all odd primes $p$.

Now the spectrum of $P_{\Gamma_p}$ corresponding to the Schreier-coset graph is included in the spectrum of $P_{\mathrm{Cay}(\mathcal{G}_p,\mathbb{S}_p)}$ corresponding the Cayley graph (see Proposition 11.17 and the preceding remark in \cite{hoory2006expander}), which gives us the claim.

\end{proof}

\subsubsection{Expander Mixing}
\begin{lemma}[Expander Mixing]\label{lem:expander-mix}
For primes $p\geq 3$ and all $t \geq 0$, the random walk matrix $P_{\Gamma_p}$ on a Schreier-coset graph $\Gamma_p$ with uniform spectral gap $\gamma_0$ satisfies
\[
\left|(P_{\Gamma_p}^t)_{iv} - \frac{1}{|V_{\Gamma_p}|}\right| \leq \left(1-\gamma_0\right)^t.
\]
If $t \geq \frac{\log(2|V_{\Gamma_p}|)}{\gamma_0}$, then $(P_{\Gamma_p}^t)_{iv} \geq \frac{1}{2|V_{\Gamma_p}|}$.
\end{lemma}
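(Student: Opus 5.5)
The plan is a spectral decomposition argument, using Lemma~\ref{lem:spectral-gap} as the only nontrivial input. Since $\Gamma_p$ is $d$-regular and undirected (the generating set $\mathbb{S}_p$ is symmetric), the transition matrix $P=P_{\Gamma_p}=\frac1d A_{\Gamma_p}$ is real symmetric. Write $N=|V_{\Gamma_p}|$. Take an orthonormal eigenbasis $f_1,\dots,f_N$ of $P$ with eigenvalues $1=\lambda_1,\lambda_2,\dots,\lambda_N$, where $f_1=\mathbf{1}/\sqrt N$. Connectivity of $\Gamma_p$ holds because $\mathbb{S}_p$ generates $\mathcal{G}_p$, so it acts transitively on cosets. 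Together with Lemma~\ref{lem:spectral-gap}, this gives that the eigenvalue $1$ is simple and $|\lambda_k|\le 1-\gamma_0$ for all $k\ge 2$. Then
\[
(P^t)_{iv}=\sum_{k=1}^N \lambda_k^t f_k(i)f_k(v)=\frac1N+\sum_{k\ge2}\lambda_k^t f_k(i)f_k(v).
\]

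Bounding the error term is the main step. Apply Cauchy--Schwarz to the sum over $k\ge 2$:
\[
\Big|\sum_{k\ge2}\lambda_k^t f_k(i)f_k(v)\Big|\le (1-\gamma_0)^t\Big(\sum_{k\ge2}f_k(i)^2\Big)^{1/2}\Big(\sum_{k\ge2}f_k(v)^2\Big)^{1/2}.
\]
Orthonormality of the basis (rows of an orthogonal matrix) gives $\sum_k f_k(i)^2=1$, so each bracket equals $1-\frac1N\le 1$. This yields the first claim $|(P^t)_{iv}-1/N|\le(1-\gamma_0)^t$ for every $t\ge0$. The case $t=0$ is trivial, since $|\delta_{iv}-1/N|\le1$.

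For the second claim, use $1-\gamma_0\le e^{-\gamma_0}$, which gives $(1-\gamma_0)^t\le e^{-\gamma_0 t}\le \frac{1}{2N}$ whenever $t\ge \log(2N)/\gamma_0$. Hence
\[
(P^t)_{iv}\ge \frac1N-\frac1{2N}=\frac1{2N}.
\]

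The only delicate point is that we need the two-sided bound on $|\lambda|$, not just $\lambda_2$. Negative eigenvalues near $-1$ would otherwise break the estimate. This is exactly what the formulation of Lemma~\ref{lem:spectral-gap} provides, via the absence of index-2 subgroups (non-bipartiteness). The rest is routine. One should also note that $\gamma_0\in(0,1]$, so that $e^{-\gamma_0}$ dominates $1-\gamma_0\ge 0$.
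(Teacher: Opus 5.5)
Your proof is correct and is essentially the paper's argument: the spectral decomposition of the symmetric walk matrix with top eigenvector $\mathbf{1}/\sqrt{N}$, the bound $|\lambda_k|\le 1-\gamma_0$ for $k\ge 2$ from the uniform spectral gap lemma, Cauchy--Schwarz with orthonormality of the rows of the eigenvector matrix, and then $(1-\gamma_0)^t\le e^{-\gamma_0 t}\le 1/(2N)$. Your extra remarks make explicit two points the paper's proof leaves implicit: connectivity of $\Gamma_p$ makes the eigenvalue $1$ simple, so the lemma (stated only for eigenvalues $\lambda\ne 1$) covers every $k\ge 2$, and $\gamma_0\le 1$.
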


\begin{proof}
Fix a prime $p\geq 3$ and let $\Gamma=\Gamma_p$. Let $P = \sum_k \lambda_k u_k u_k^\top$ be the spectral decomposition with orthonormal eigenvectors $u_k$. We have $u_1 = \frac{1}{\sqrt{|V_{\Gamma}|}}\mathbf{1}$ with $\lambda_1 = 1$. Thus:
\begin{equation}
P^t = \frac{1}{|V_{\Gamma}|}\mathbf{1}\mathbf{1}^\top + \sum_{k=2}^{|V_{\Gamma}|} \lambda_k^t u_k u_k^\top
\end{equation}
Therefore
\begin{equation}
(P^t)_{iv} = \frac{1}{|V_{\Gamma}|} + \sum_{k=2}^{|V_{\Gamma}|} \lambda_k^t u_k(i)u_k(v)
\end{equation}
By Lemma~\ref{lem:spectral-gap} we have that $|\lambda_k|\leq 1-\gamma_0$ for $k\geq 2$. Hence, by Cauchy-Schwarz inequality
\begin{align}
\left|(P^t)_{iv} - \frac{1}{|V_{\Gamma}|}\right| &\leq \sum_{k=2}^{|V_{\Gamma}|} |\lambda_k|^t |u_k(i)||u_k(v)| \\
&\leq (1-\gamma_0)^t \sum_{k=2}^{|V_{\Gamma}|} |u_k(i)||u_k(v)| \\
&\leq (1-\gamma_0)^t \sqrt{\sum_{k=2}^{|V_{\Gamma}|}u_k(i)^2} \sqrt{\sum_{k=2}^{|V_{\Gamma}|} u_k(v)^2} \\
&\leq (1-\gamma_0)^t
\end{align}
where the last inequality uses $\sum_{k=1}^{|V_{\Gamma}|} u_k(i)^2 = 1$ (orthonormality).
Now note that $(1-\gamma_0)^t \leq e^{-\gamma_0t}$. Hence, if $t \geq \frac{\log(2|V_\Gamma|)}{\gamma_0}$, then
$$(1-\gamma_0)^t \leq e^{-\gamma_0 t}\leq \frac{1}{2|V_\Gamma|}.$$
It follows that $$(P^t)_{iv}\geq \frac{1}{|V_\Gamma|} -\frac{1}{2|V_\Gamma|}=\frac{1}{2|V_\Gamma|}.$$
\end{proof}

\subsubsection{Effective Resistance}
\begin{lemma}[Effective Resistance Bound]\label{lem:eff-res}
For any prime $p \geq 3$ and vertices $u,v \in V_\Gamma$,
\[
R_{\mathrm{eff}}^{\Gamma_p}(u,v) \leq \frac{2}{d\gamma_0},
\]
where $d = |\mathbb{S}|$ is the degree and $\gamma_0$ is the uniform spectral gap.
\end{lemma}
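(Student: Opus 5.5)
The plan is to use the spectral formula for effective resistance and bound the pseudoinverse of the Laplacian by the uniform spectral gap of Lemma~\ref{lem:spectral-gap}. Fix a prime $p\ge 3$ and write $\Gamma=\Gamma_p$, $N=|V_\Gamma|$, $L_\Gamma=dI-A_\Gamma=d(I-P_{\Gamma})$. Any loops or multi-edges produced by generators fixing a coset are kept in $A_\Gamma$. This keeps $\Gamma$ $d$-regular, and $L_\Gamma$ is still the Laplacian of the corresponding electrical network, since loops carry no current. I will use the standard identity
\[
R_{\mathrm{eff}}^{\Gamma}(u,v)=(e_u-e_v)^\top L_\Gamma^{+}(e_u-e_v),
\]
where $L_\Gamma^{+}$ is the Moore--Penrose pseudoinverse.

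\textbf{Step 1 (connectedness and kernel).} Since $\mathbb{S}_p$ generates $\mathcal{G}_p$, every coset $gH$ is reached from $H$ by a word in the generators, so $\Gamma$ is connected. Hence the eigenvalue $1$ of $P_\Gamma$ is simple, with eigenvector $\mathbf 1$, and $\ker L_\Gamma=\mathrm{span}(\mathbf 1)$. This also justifies reading ``$\lambda\neq 1$'' in Lemma~\ref{lem:spectral-gap} as ranging over all eigenvalues on $\mathbf 1^\perp$.

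\textbf{Step 2 (spectral gap of $L_\Gamma$).} $P_\Gamma$ is symmetric, so take an orthonormal eigenbasis $u_1=\mathbf 1/\sqrt N,u_2,\dots,u_N$ with eigenvalues $\lambda_k$. Lemma~\ref{lem:spectral-gap} gives $|\lambda_k|\le 1-\gamma_0$ for $k\ge 2$, and in particular $1-\lambda_k\ge\gamma_0$. The nonzero eigenvalues of $L_\Gamma$ are $d(1-\lambda_k)$, so they are all at least $d\gamma_0$. Consequently $L_\Gamma^{+}=\sum_{k\ge2}\frac{1}{d(1-\lambda_k)}u_ku_k^\top\preceq \frac{1}{d\gamma_0}\Pi$, where $\Pi$ is the orthogonal projection onto $\mathbf 1^\perp$. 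Only the one-sided bound $1-\lambda_k\ge\gamma_0$ is needed here.

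\textbf{Step 3 (conclude).} The vector $x=e_u-e_v$ satisfies $x\perp\mathbf 1$, so $\Pi x=x$. Therefore
\[
R_{\mathrm{eff}}^{\Gamma}(u,v)=x^\top L_\Gamma^+x\le \frac{\|x\|_2^2}{d\gamma_0}=\frac{2}{d\gamma_0}
\]
for $u\neq v$; the case $u=v$ is trivial. The constant $\gamma_0$ is independent of $p$, so the bound is uniform over the family.

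The main obstacle is not the computation but the bookkeeping in Step 1. One must confirm that the Schreier graph is connected, so that $L_\Gamma^+$ acts as an inverse on exactly $\mathbf 1^\perp$. One must also confirm that the paper's normalization of $P_\Gamma$, with loops counted in the degree, matches the operator whose spectrum Lemma~\ref{lem:spectral-gap} controls. I would settle both by the generation argument and by noting that the spectral inclusion into the Cayley graph spectrum used in Lemma~\ref{lem:spectral-gap} is for exactly this $d$-regular operator $\frac1d A_\Gamma$.
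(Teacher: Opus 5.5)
Your proof is correct and follows essentially the same route as the paper: both reduce the claim to the lower bound $1-\lambda_2(P_{\Gamma_p})\geq\gamma_0$ on the nonzero eigenvalues of the normalized Laplacian $I-P_{\Gamma_p}$ of the $d$-regular graph, and both use only this one-sided bound. The one difference is that the paper imports the inequality $R_{\mathrm{eff}}(u,v)\leq \lambda_2(\tilde L)^{-1}\bigl(\tfrac{1}{d(u)}+\tfrac{1}{d(v)}\bigr)$ from Cai et al., whereas you derive it directly from $R_{\mathrm{eff}}=(e_u-e_v)^\top L_\Gamma^{+}(e_u-e_v)$, which also makes explicit the connectivity needed for the eigenvalue $1$ to be simple.
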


\begin{proof}
For a $d$-regular graph $\Gamma$ with transition matrix \(P\) having spectral gap \(\gamma\), Lemma 2.2, Property 3 in \cite{cai2023effective} gives us that
\begin{equation}
\frac{1}{2}\Big(\frac{1}{d(u)}+ \frac{1}{d(v)}\Big) 
\leq R_{\mathrm{eff}}(u,v) \leq \frac{1}{\lambda_2(\tilde{L})}\cdot \Big(\frac{1}{d(u)}+ \frac{1}{d(v)}\Big)
\end{equation}
where, \(\tilde{L}\) is the normalized Laplacian of \(\Gamma\). 
Therefore, we have \(d(u) = d(v) = d\) for all vertices \(u,v \in V\). Therefore:
\begin{equation}\label{eq1}
\frac{1}{d} \leq R_{\mathrm{eff}}(u,v) \leq \frac{2}{d \cdot \lambda_2(\tilde{L})}
\end{equation}
Now, for a $d$-regular graph, the normalized Laplacian is \(\tilde{L} = I-P\), where \(P\) is a transition matrix. Therefore, if \(\lambda\) is an eigenvalue of \(P\), then \(1-\lambda\) is an eigenvalue of \(\tilde{L}\). 
Specifically:
\begin{itemize}
    \item \(\lambda_1 = 1-\lambda_1(P) = 1-1=0\)
    \item \(\lambda_2(\tilde{L}) = 1-\lambda_2(P)\)
\end{itemize}
Now, applying the spectral gap condition,
\begin{equation}
\lambda_2(\tilde{L}) = 1 - \lambda_2(P) \geq 1-(1-\gamma) = \gamma
\end{equation}
Substituting this lower bound to the inequality in (\ref{eq1}), we get:
\begin{equation}
R_{\mathrm{eff}}^{\Gamma}(u,v) \leq \frac{2}{d\cdot\lambda_2(\tilde{L})} \leq \frac{2}{d \gamma}.
\end{equation}
This bound is tight up to constants, as the effective resistance can indeed approach this upper bound for pairs of vertices that are apart in the graph structure., particularly in expander graphs where the spectral gap \(\gamma\) is bounded away from zero. Now take $\gamma=\gamma_0$ as our uniform spectral gap so that this holds for all primes $p \geq 3$.
\end{proof}

\subsubsection{Spectral Mapping Construction}

Let $G_{\mathrm{in}}=(V_{\mathrm{in}},E_{\mathrm{in}})$ be the input graph. We construct a locality-preserving map $\phi:V_{\mathrm{in}}\to V_\Gamma$ (with $\phi(v)=g_vH$) as follows:\\
\textbf{Case (i)} $|V_{\mathrm{in}}|\le |V_\Gamma|$: Let $L_{\mathrm{in}},L_\Gamma$ be (normalized) Laplacians with eigenvectors $\{\psi_i\},\{\varphi_i\}$ ordered by eigenvalues. Define
$\Phi_{\mathrm{in}}(v)=(\psi_2(v),\ldots,\psi_{r+1}(v))$ and
$\Phi_\Gamma(x)=(\varphi_2(x),\ldots,\varphi_{r+1}(x))\in\mathbb{R}^r$.
Set $\phi$ by solving:
\begin{equation}
\min_{\phi:V_{\mathrm{in}}\hookrightarrow V_\Gamma}\ \sum_{(u,v)\in E_{\mathrm{in}}}\mathrm{dist}_\Gamma(\phi(u),\phi(v))
\quad \text{such that}\quad
\|\Phi_\Gamma(\phi(v))-\Phi_{\mathrm{in}}(v)\|_2\le \varepsilon\ \ \forall v .
\end{equation}
\textbf{Case (ii)} $|V_{\mathrm{in}}|>|V_\Gamma|$: Use disjoint copies $\Gamma^{(1)},\ldots,\Gamma^{(q)}$ (or $\Gamma\times K_q$) and apply (i) per block.\\

\begin{theorem}[Locality Control]\label{thm:lipschitz}
Assume there exists constants $c_{\mathrm{in}}<\infty$ and $c_\Gamma>0$ such that $\Phi_{\mathrm{in}}$ and $\Phi_{\Gamma}$ satisfy
$$\|\Phi_{\mathrm{in}}(u)-\Phi_{\mathrm{in}}(v)\|_2 \leq c_{\mathrm{in}}\operatorname{dist}_{\mathrm{in}}(u,v)$$
for every $u,v \in V_{\mathrm{in}}$ and
$$\|\Phi_\Gamma(x)-\Phi_\Gamma(y)\|_2 \geq c_\Gamma \operatorname{dist}_{\Gamma}(x,y)$$
for every $x,y \in \phi(V_{\mathrm{in}})$. If $\|\Phi_\Gamma(\phi(v))-\Phi_{\mathrm{in}}(v)\|_2 \leq \epsilon$ for every $v$, then
$$\operatorname{dist}_\Gamma(\phi(u),\phi(v))\leq \frac{c_{\mathrm{in}}}{c_\Gamma}\operatorname{dist}_{\mathrm{in}}(u,v)+\frac{2\epsilon}{c_\Gamma}.$$
\end{theorem}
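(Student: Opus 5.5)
The argument is a direct chaining of the three hypotheses through the triangle inequality in $\mathbb{R}^r$. Fix $u,v\in V_{\mathrm{in}}$ and set $x=\phi(u)$ and $y=\phi(v)$. Both lie in $\phi(V_{\mathrm{in}})$, so the lower Lipschitz (co-Lipschitz) assumption on $\Phi_\Gamma$ applies:
$$c_\Gamma\,\operatorname{dist}_\Gamma(\phi(u),\phi(v))\le \|\Phi_\Gamma(\phi(u))-\Phi_\Gamma(\phi(v))\|_2 .$$
This converts the graph distance in $\Gamma$, which is what we want to bound, into a Euclidean distance between spectral coordinates.

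Next I would insert the input-graph embeddings and split the right-hand side with the triangle inequality into three terms:
$$\|\Phi_\Gamma(\phi(u))-\Phi_{\mathrm{in}}(u)\|_2+\|\Phi_{\mathrm{in}}(u)-\Phi_{\mathrm{in}}(v)\|_2+\|\Phi_{\mathrm{in}}(v)-\Phi_\Gamma(\phi(v))\|_2 .$$
The first and third terms are each at most $\epsilon$ by the alignment condition $\|\Phi_\Gamma(\phi(w))-\Phi_{\mathrm{in}}(w)\|_2\le\epsilon$, used once at $w=u$ and once at $w=v$. This is the source of the additive $2\epsilon$. The middle term is at most $c_{\mathrm{in}}\operatorname{dist}_{\mathrm{in}}(u,v)$ by the upper Lipschitz assumption on $\Phi_{\mathrm{in}}$.

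Combining the two displays gives $c_\Gamma\operatorname{dist}_\Gamma(\phi(u),\phi(v))\le c_{\mathrm{in}}\operatorname{dist}_{\mathrm{in}}(u,v)+2\epsilon$. Dividing by $c_\Gamma>0$ yields exactly the claimed bound.

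None of these steps is technically hard. The one point that needs care is that the co-Lipschitz hypothesis is only assumed on the image $\phi(V_{\mathrm{in}})$, so I would state explicitly that $\phi(u),\phi(v)$ belong to that set. I would also note that the two distances are finite because $\Gamma$ is connected (it is generated by $\mathbb{S}$) and $G_{\mathrm{in}}$ is connected by the standing assumptions. In Case (ii), where $\phi$ maps into disjoint copies of $\Gamma$, the same argument applies within each block, provided the hypotheses are read per copy.
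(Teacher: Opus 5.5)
Your proposal is correct and matches the paper's proof: the triangle inequality through $\Phi_{\mathrm{in}}$ gives $\|\Phi_\Gamma(\phi(u))-\Phi_\Gamma(\phi(v))\|_2\le c_{\mathrm{in}}\operatorname{dist}_{\mathrm{in}}(u,v)+2\epsilon$, and the lower bound on $\phi(V_{\mathrm{in}})$ followed by division by $c_\Gamma$ finishes it. Your side remarks on finiteness of distances and on reading the hypotheses per copy in Case (ii) go beyond the paper's proof but are not needed for the argument.
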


\begin{proof}
    By assumption, there exists constants $c_{\mathrm{in}}<\infty$ and $c_\Gamma>0$ such that $\Phi_{\mathrm{in}}$ and $\Phi_{\Gamma}$ satisfy
    $$\|\Phi_{\mathrm{in}}(u)-\Phi_{\mathrm{in}}(v)\|_2 \leq c_{\mathrm{in}}\operatorname{dist}_{\mathrm{in}}(u,v)$$
    for every $u,v \in V_{\mathrm{in}}$ and
    $$\|\Phi_\Gamma(x)-\Phi_\Gamma(y)\|_2 \geq c_\Gamma \operatorname{dist}_{\Gamma}(x,y)$$
    for every $x,y \in \phi(V_{\mathrm{in}})$.
    Since $\|\Phi_\Gamma(\phi(v))-\Phi_{\mathrm{in}}(v)\|_2 \leq \epsilon$, we have by the triangle inequality that
    \begin{align*}
        \|\Phi_\Gamma(\phi(u))-\Phi_\Gamma(\phi(v))\|_2 &\leq \|\Phi_{\mathrm{in}}(u)-\Phi_{\mathrm{in}}(v)\|_2 + 2\epsilon\\
        &\leq c_{\mathrm{in}}\operatorname{dist}_{\mathrm{in}}(u,v)+ 2\epsilon.
    \end{align*}
    Thus
    \begin{align*}
        \operatorname{dist}_\Gamma(\phi(u),\phi(v))&\leq \frac{1}{c_\Gamma}\|\Phi_\Gamma(\phi(u))-\Phi_\Gamma(\phi(v))\|_2\\
        &\leq \frac{c_{\mathrm{in}}}{c_\Gamma}\operatorname{dist}_{\mathrm{in}}(u,v)+\frac{2\epsilon}{c_\Gamma}.
    \end{align*}
\end{proof}

\subsubsection{Effective Resistance Analysis of Rewired Graph}
\textbf{Augmented System}
The augmented system couples $G_{\mathrm{in}}$ with $\Gamma$ via edges $\{(v, \phi(v))\}$ of conductance $\varepsilon > 0$.
All edges in $E_{\mathrm{in}}$ and $E_\Gamma$ have unit conductance.
Let $P\in\{0,1\}^{|V_{\mathrm{in}}|\times |V_\Gamma|}$ be the injection/matching matrix with
$P_{v,\phi(v)}=1$ and $0$ otherwise. Then the augmented Laplacian is
\begin{equation}
L_{\mathrm{aug}} =
\begin{bmatrix}
L_{\mathrm{in}} + \varepsilon I & -\varepsilon P\\
-\varepsilon P^\top & L_\Gamma + \varepsilon P^\top P
\end{bmatrix}.
\end{equation}
\medskip
\begin{theorem}[Effective Resistance in Rewired Graph]\label{thm:eff-rwd}
In the rewired graph $G^{\mathrm{rwd}}$, the effective resistance between nodes $u,v \in V_{\mathrm{in}}$ satisfies
\[
R_{\mathrm{eff}}^{\mathrm{rwd}}(u,v) \leq
\min\Big\{R_{\mathrm{eff}}^{\mathrm{in}}(u,v),\;
R_{\mathrm{eff}}^\Gamma(\phi(u),\phi(v)) + \frac{2}{\varepsilon}\Big\}.
\]
\end{theorem}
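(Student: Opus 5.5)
Both bounds follow from Thomson's principle. In the network $G^{\mathrm{rwd}}$ with conductances $c_e$,
\[
R_{\mathrm{eff}}^{\mathrm{rwd}}(u,v)=\min\Big\{\sum_e f_e^2/c_e \;:\; f \text{ a unit } u\to v \text{ flow in } G^{\mathrm{rwd}}\Big\}.
\]
Any admissible unit flow therefore gives an upper bound. I will exhibit two such flows, one for each term of the minimum. The bound is then their minimum.

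\textbf{First term.} Take the optimal unit $u\to v$ flow $f^{\mathrm{in}}$ in $G_{\mathrm{in}}$. It attains energy $R_{\mathrm{eff}}^{\mathrm{in}}(u,v)$. Extend it by zero on $E_\Gamma^{\uparrow}\cup E_{\mathrm{cpl}}$. Flow conservation still holds at every vertex of $G^{\mathrm{rwd}}$: nodes of $V_\Gamma$ carry zero net flow, and nodes of $V_{\mathrm{in}}$ see the same net flow as before. Since $E_{\mathrm{in}}$ keeps unit conductance in $G^{\mathrm{rwd}}$, the energy is unchanged. This gives $R_{\mathrm{eff}}^{\mathrm{rwd}}(u,v)\le R_{\mathrm{eff}}^{\mathrm{in}}(u,v)$. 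Equivalently, this is Rayleigh monotonicity, since adding edges and nodes cannot increase resistance.

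\textbf{Second term.} Build a series flow in three pieces:
\begin{enumerate}
\item Send one unit along the coupling edge $u\to\phi(u)$. Its conductance is $\varepsilon$, so the energy is $1/\varepsilon$.
\item Route the optimal unit flow from $\phi(u)$ to $\phi(v)$ inside $\Gamma$. Its energy is $R_{\mathrm{eff}}^\Gamma(\phi(u),\phi(v))$, because $E_\Gamma^{\uparrow}$ has unit weights.
\item Send one unit along $\phi(v)\to v$, with energy $1/\varepsilon$.
\end{enumerate}
The sum is a valid unit $u\to v$ flow. At $\phi(u)$ the unit arriving from $u$ equals the unit injected into the $\Gamma$-flow, and symmetrically at $\phi(v)$. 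Every other vertex is balanced.

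Because the three pieces use pairwise disjoint edge sets, there are no cross terms, and the energies add: $R_{\mathrm{eff}}^\Gamma(\phi(u),\phi(v))+2/\varepsilon$.

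\textbf{Main obstacle: degenerate cases.} Several things could go wrong and must be checked.
\begin{itemize}
\item \emph{Same image, $\phi(u)=\phi(v)$.} Then the $\Gamma$ part is empty and $R^\Gamma=0$. The two coupling edges are distinct, so the bound $2/\varepsilon$ still holds.
\item \emph{Non-injective $\phi$.} Several coupling edges may attach to the same coset, which is why $L_{\mathrm{aug}}$ contains $\varepsilon P^\top P$. The flow uses only the two coupling edges at $u$ and $v$, so this causes no problem.
\item \emph{Edge weights in $\Gamma$.} I will assume no multi-edges are merged in the construction of $\Gamma$. If they were merged with added weights, resistance could only decrease, and the bound would still hold.
\item \emph{The $q$-copies case.} If $\phi(u)$ and $\phi(v)$ lie in different disjoint copies of $\Gamma$, then $R^\Gamma=\infty$ and the second bound is vacuous but still true.
\end{itemize}
Finally, Thomson's principle requires $u$ and $v$ to be connected in $G^{\mathrm{rwd}}$. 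This holds because $G_{\mathrm{in}}$ is assumed connected.
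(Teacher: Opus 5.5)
Your proposal is correct and is essentially the paper's own argument. It applies Thomson's principle to two test flows: the optimal $G_{\mathrm{in}}$-flow extended by zero, and the series flow through the coupling edge at $u$, the optimal $\Gamma$-flow, and the coupling edge at $v$. Your explicit handling of degenerate cases ($\phi(u)=\phi(v)$, non-injective $\phi$, disjoint copies of $\Gamma$, and reading $R_{\mathrm{eff}}^\Gamma$ as the resistance of the unit-weight $\Gamma$-layer as embedded in $G^{\mathrm{rwd}}$) is extra care the paper omits, but it does not change the route.
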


\begin{proof}
We prove this using:
\textbf{Rayleigh--Thomson Principle}
Effective resistance equals the minimum energy of a unit $u\!\to\!v$ flow.
Consider two feasible unit flows:

\emph{Route 1:} Use the minimum-energy unit flow in $G_{\mathrm{in}}$ only.
This has energy $R_{\mathrm{eff}}^{\mathrm{in}}(u,v)$, hence
$R_{\mathrm{eff}}^{\mathrm{rwd}}(u,v)\le R_{\mathrm{eff}}^{\mathrm{in}}(u,v)$.

\emph{Route 2:} Send one unit across the coupling edge $(u,\phi(u))$,
then route through $\Gamma$ from $\phi(u)$ to $\phi(v)$ using the minimum-energy
unit flow in $\Gamma$, then send one unit across $(v,\phi(v))$.
Coupling edges have conductance $\varepsilon$ and thus resistance $1/\varepsilon$,
so each coupling edge contributes energy $1/\varepsilon$ (unit flow).
The $\Gamma$ segment contributes $R_{\mathrm{eff}}^\Gamma(\phi(u),\phi(v))$
since $\Gamma$ edges have unit conductance. Therefore this route has total energy
\[
R_{\mathrm{eff}}^\Gamma(\phi(u),\phi(v))+\frac{2}{\varepsilon}.
\]
Taking the minimum over all flows yields
\[
R_{\mathrm{eff}}^{\mathrm{rwd}}(u,v)\le
\min\Big\{R_{\mathrm{eff}}^{\mathrm{in}}(u,v),\;
R_{\mathrm{eff}}^\Gamma(\phi(u),\phi(v)) + \frac{2}{\varepsilon}\Big\}.
\]
\end{proof}

\begin{theorem}[Over-Squashing Mitigation]\label{thm:squashing}
For any $u,v\in V_{\mathrm{in}}$ with $u \neq v$,
\[
\rho(u,v):=\frac{R_{\mathrm{eff}}^{\mathrm{in}}(u,v)}{R_{\mathrm{eff}}^{\mathrm{rwd}}(u,v)}
\ge
\max\left\{1,\;\frac{R_{\mathrm{eff}}^{\mathrm{in}}(u,v)}{R_{\mathrm{eff}}^\Gamma(\phi(u),\phi(v))+\frac{2}{\varepsilon}}\right\}.
\]
\end{theorem}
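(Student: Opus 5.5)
This is a direct corollary of Theorem~\ref{thm:eff-rwd}. The plan is to invert the minimum bound on $R_{\mathrm{eff}}^{\mathrm{rwd}}$ into a maximum bound on the ratio $\rho$.

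\textbf{Positivity.} First I check that $\rho(u,v)$ is well defined and positive. Since $u\neq v$, any graph in which $u$ and $v$ are connected has $0<R_{\mathrm{eff}}(u,v)<\infty$: the minimum-energy unit flow must carry a nonzero current on some edge. This holds in $G_{\mathrm{in}}$, which is assumed connected. It also holds in $G^{\mathrm{rwd}}$, which contains $G_{\mathrm{in}}$. Likewise the quantity $B(u,v):=R_{\mathrm{eff}}^\Gamma(\phi(u),\phi(v))+\frac{2}{\varepsilon}$ is strictly positive because $\varepsilon>0$. It is also finite, since $\Gamma$ is connected: $\mathbb{S}$ generates $\mathcal{G}$, or one can invoke Lemma~\ref{lem:eff-res}.

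\textbf{Main step.} By Theorem~\ref{thm:eff-rwd},
\[
0<R_{\mathrm{eff}}^{\mathrm{rwd}}(u,v)\le \min\{R_{\mathrm{eff}}^{\mathrm{in}}(u,v),\,B(u,v)\}.
\]
The map $x\mapsto R_{\mathrm{eff}}^{\mathrm{in}}(u,v)/x$ is decreasing on $(0,\infty)$, so
\[
\rho(u,v)\ge \frac{R_{\mathrm{eff}}^{\mathrm{in}}(u,v)}{\min\{R_{\mathrm{eff}}^{\mathrm{in}}(u,v),B(u,v)\}}
=\max\left\{\frac{R_{\mathrm{eff}}^{\mathrm{in}}(u,v)}{R_{\mathrm{eff}}^{\mathrm{in}}(u,v)},\frac{R_{\mathrm{eff}}^{\mathrm{in}}(u,v)}{B(u,v)}\right\}.
\]
The first entry equals $1$, which gives exactly the claimed bound. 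The step $1/\min\{a,b\}=\max\{1/a,1/b\}$ for positive $a,b$ is immediate.

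\textbf{Obstacle.} The only potential obstacle is the degenerate case bookkeeping, namely ensuring no division by zero or infinity. The positivity step handles this. I would also remark that the case $\phi(u)=\phi(v)$, which can occur for a non-injective map as in the disjoint-copies setting, is harmless: then $R_{\mathrm{eff}}^\Gamma=0$ and $B=2/\varepsilon>0$ still holds.
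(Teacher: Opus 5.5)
Your proposal is correct and is essentially the paper's proof: both take the bound $R_{\mathrm{eff}}^{\mathrm{rwd}}(u,v)\le\min\{R_{\mathrm{eff}}^{\mathrm{in}}(u,v),\,R_{\mathrm{eff}}^\Gamma(\phi(u),\phi(v))+2/\varepsilon\}$ from Theorem~\ref{thm:eff-rwd}, invert it into a maximum of reciprocals, and multiply by $R_{\mathrm{eff}}^{\mathrm{in}}(u,v)$. Your check that $R_{\mathrm{eff}}^{\mathrm{rwd}}(u,v)>0$ for $u\neq v$ is a little more careful than the paper's, which only notes $R\ge 0$; it rules out division by zero.
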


\begin{proof}
Let $R=R_{\mathrm{eff}}^{\mathrm{in}}(u,v)$ and $B=R_{\mathrm{eff}}^{\Gamma}(\phi(u),\phi(v))+\frac{2}{\varepsilon}$. From Theorem~\ref{thm:eff-rwd}, we have that
$$\frac{1}{R_{\mathrm{eff}}^{\mathrm{rwd}}(u,v)} \geq \max \left\{\frac{1}{R}, \frac{1}{B} \right\}.$$
Hence, since $R\geq 0$,
\begin{align*}
    \rho(u,v)&=\frac{R}{R_{\mathrm{eff}}^{\mathrm{rwd}}(u,v)}\\
    &\geq \max\left\{ \frac{R}{R},\frac{R}{B}\right\}\\
    &= \max \left\{1,\frac{R_{\mathrm{eff}}^{\mathrm{in}}(u,v)}{R_{\mathrm{eff}}^\Gamma(\phi(u),\phi(v))+\frac{2}{\varepsilon}} \right\}.
\end{align*}

\end{proof}
\paragraph{Remark A.3.1}
For pairs with large $R_{\mathrm{eff}}^{\mathrm{in}}(u,v)$, the ratio
$R_{\mathrm{eff}}^{\mathrm{rwd}}(u,v)/R_{\mathrm{eff}}^{\mathrm{in}}(u,v)$
decays as $O(1/R_{\mathrm{eff}}^{\mathrm{in}}(u,v))$, and becomes small whenever
$R_{\mathrm{eff}}^{\mathrm{in}}(u,v)\gg \frac{2}{d\gamma}+\frac{2}{\varepsilon}$.

\subsubsection{Performance Guarantees}
\begin{theorem}[Performance Guarantees]\label{thm:perf}
    Suppose the transition matrices of random walks on the Schreier-coset graphs satisfy
    $$1-\lambda_2(P_{\Gamma_p})\geq \gamma_0>0$$
    uniformly in $p$ and define $B_0=\frac{2}{d\gamma_0}+\frac{2}{\epsilon}$. Then for every $u,v \in V_{\mathrm{in}}$ with $u \neq v$,
    $$R_{\mathrm{eff}}^{\mathrm{rwd}}(u,v) \leq \frac{R_{\mathrm{eff}}^{\mathrm{in}}(u,v)B_0}{R_{\mathrm{eff}}^{\mathrm{in}}(u,v)+B_0}.$$
    Consequently,
    $$\rho(u,v) \geq 1 +\frac{R_{\mathrm{eff}}^{\mathrm{in}}(u,v)}{B_0}.$$
\end{theorem}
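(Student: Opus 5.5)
The plan is to strengthen the two single-route flows in the proof of Theorem~\ref{thm:eff-rwd} into one \emph{mixed} flow that splits the unit current between the two routes. Then Thomson's principle gives the parallel-resistance formula directly.

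Fix $u\neq v$ in $V_{\mathrm{in}}$. Let $R=R_{\mathrm{eff}}^{\mathrm{in}}(u,v)$ and $B=R_{\mathrm{eff}}^{\Gamma}(\phi(u),\phi(v))+\frac{2}{\varepsilon}$.
\begin{itemize}
\item Let $f_1$ be the minimum-energy unit $u\to v$ flow in $G_{\mathrm{in}}$ (Route~1). It has energy $R$.
\item Let $f_2$ be the Route~2 flow: one unit across $(u,\phi(u))$, the minimum-energy unit flow in $\Gamma$ from $\phi(u)$ to $\phi(v)$, then one unit across $(\phi(v),v)$. It has energy $B$.
\end{itemize}
For $\theta\in[0,1]$, the combination $f_\theta=\theta f_1+(1-\theta)f_2$ is again a unit $u\to v$ flow in $G^{\mathrm{rwd}}$, since the conservation constraints are linear.

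The key observation is that $f_1$ is supported on $E_{\mathrm{in}}$, while $f_2$ is supported on $E_\Gamma^{\uparrow}\cup E_{\mathrm{cpl}}$, and these edge sets are disjoint. This holds even if $\phi$ is not injective, because coupling edges always join an input node to an auxiliary node. Hence no cross terms appear in the energy:
\begin{equation*}
\mathcal{E}(f_\theta)=\theta^2R+(1-\theta)^2B .
\end{equation*}
Minimizing over $\theta$ gives $\theta^*=B/(R+B)$, with minimum value $RB/(R+B)$. By Thomson's principle,
\begin{equation*}
R_{\mathrm{eff}}^{\mathrm{rwd}}(u,v)\le \frac{RB}{R+B}.
\end{equation*}
This is precisely the series-parallel bound obtained from these two routes.

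Next, replace $B$ by $B_0$. Lemma~\ref{lem:eff-res} gives $R_{\mathrm{eff}}^{\Gamma}(\phi(u),\phi(v))\le \frac{2}{d\gamma_0}$, so $B\le B_0$. That lemma's proof only uses $\lambda_2(\tilde L)=1-\lambda_2(P_{\Gamma_p})\ge\gamma_0$, which is exactly the hypothesis assumed here. The map $x\mapsto Rx/(R+x)$ is increasing in $x>0$, since its derivative is $R^2/(R+x)^2$. Therefore $RB/(R+B)\le RB_0/(R+B_0)$, which is the first claim. Here $R>0$ because $u\neq v$ and $G_{\mathrm{in}}$ is connected. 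For the consequence, divide:
\begin{equation*}
\rho(u,v)=\frac{R}{R_{\mathrm{eff}}^{\mathrm{rwd}}(u,v)}\ge \frac{R+B_0}{B_0}=1+\frac{R}{B_0}.
\end{equation*}

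The main obstacle is justifying that Route~2 is a valid flow. This requires $\phi(u)$ and $\phi(v)$ to be connected inside the $\Gamma$-layer, so that Lemma~\ref{lem:eff-res} applies to them.
\begin{itemize}
\item In Case~(i) this holds, since $\Gamma_p$ is connected ($\mathbb{S}_p$ generates $\mathcal{G}_p$).
\item In Case~(ii) with disjoint copies of $\Gamma$, $u$ and $v$ may be mapped to different copies. Then Route~2 is unavailable and the bound must be restricted to pairs in the same block. Alternatively one can use the $\Gamma\times K_q$ variant, where a bounded extra resistance would have to be absorbed into $B_0$.
\end{itemize}
I would state this restriction explicitly, or assume Case~(i), before carrying out the computation above.
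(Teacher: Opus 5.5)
Your proposal is correct and follows essentially the same argument as the paper: you take the convex combination of the input-layer flow and the $\Gamma$-bypass flow on disjoint edge supports, with energy $\theta^2R+(1-\theta)^2B$ minimized at $\theta^*=B/(R+B)$, and then use $B\le B_0$ from Lemma~\ref{lem:eff-res} together with monotonicity of $x\mapsto Rx/(R+x)$. Your side remarks are valid refinements that the paper's proof leaves implicit: Lemma~\ref{lem:eff-res} needs only the one-sided gap $\lambda_2(\tilde L)\ge\gamma_0$, and Route~2 requires $\phi(u)$ and $\phi(v)$ to lie in the same connected copy of $\Gamma$, which can fail in the disjoint-copies variant.
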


\begin{proof}
    Let $R=R_{\mathrm{eff}}^{\mathrm{in}}(u,v)$ and the resistance of the Schreier-coset graph by pass with conductance $\varepsilon$ by $B=R_{\mathrm{eff}}^{\Gamma_p}(\phi(u),\phi(v))+\frac{2}{\varepsilon}$. By Lemma~\ref{lem:eff-res}, we have that $R_{\mathrm{eff}}^{\Gamma_p}\leq \frac{2}{d\gamma_0}$, and so
    $$B\leq \frac{2}{d\gamma_0}+\frac{2}{\varepsilon}=B_0.$$
    Now choose an optimal unit flow $f_{\mathrm{in}}$ from $u \to v$ supported on $G_{\mathrm{in}}$ with energy $R$ and a unit flow $f_{\Gamma}$ from $u \to v$ that travels through the coupling edges and $\Gamma_p$ with energy $B$.
    Let $a\in [0,1]$ be the fraction of the current going through the input layer and $1-a$ through the Schreier graph:
    $$f_a=a\cdot f_{\mathrm{in}} + (1-a)f_\Gamma.$$
    This is another unit flow from $u \to v$ with disjoint edge supports, so its energy is
    $$E(f_a)=a^2R +(1-a)^2B.$$
    The energy is minimized when $$aR=(1-a)B,$$
    so $$a_*=\frac{B}{R+B}, \quad 1-a_*=\frac{R}{R+B}$$
    with
    $$E(f_{a_*})=\frac{RB}{R+B}.$$
    As the function $x \mapsto \frac{Rx}{R+x}$ is increasing for $x>0$ and $B\leq B_0$, we have that 
    $$R_{\mathrm{eff}}^{\mathrm{rwd}}(u,v)\leq \frac{RB_0}{R+B_0}=\frac{R_{\mathrm{eff}}^{\mathrm{in}}(u,v)B_0}{R_{\mathrm{eff}}^{\mathrm{in}}(u,v)+B_0}.$$
    For the second claim,
    \begin{align*}
        \rho(u,v) &= \frac{R_{\mathrm{eff}}^{\mathrm{in}}(u,v)}{R_{\mathrm{eff}}^{\mathrm{rwd}}(u,v)}\geq \frac{R}{RB_0/(R+B_0)}= \frac{R+B_0}{B_0}=1+\frac{R}{B_0}=1+\frac{R_{\mathrm{eff}}^{\mathrm{in}}(u,v)}{B_0}.
    \end{align*}
    
\end{proof}

\subsection{Algorithm Functions}

The following functions are sub-functions of \textbf{Algorithm 1} 

\subsubsection{FindN}

The \(\pmod n\) for the group \(SL(2,\mathbb{Z}_n)\) is selected based on input graph size. The algorithm searches for smallest \(n_{in}\) such that the resulting Schreier-Coset size \(|V_\Gamma|\) satisfies \(|V_\Gamma| \geq n_{in}\). This ensures complete coverage of input nodes. To optimize expansion properties, a \textit{smallest prime} strategy is applied to prioritize prime moduli. The results are cached to ensure the search and subsequent graph construction occur only once per unique graph size. This helps in dynamic scaling as it encompases the entire graph nodes, expands optimally by leveraging the \(SL(2,\mathbb{Z}_n)\) properties providing high-conductivity bypasses.

\begin{algorithm}[h]
\caption{Automatic Modulo Selection (\textsc{FindN})}
\label{alg:findn_compact}
\footnotesize
\begin{algorithmic}

\Require Input node count $n_{\mathrm{in}}$, search strategy $S$
\Ensure Modulo $n$ and coset-space size $|V_\Gamma|$

\If{$(n_{\mathrm{in}},S)$ is cached}
    \State \Return cached values
\EndIf

\For{$n_{\mathrm{cand}} \gets 2$ \textbf{to}
     $\max(100,n_{\mathrm{in}}+10)$}
    \State $\mathrm{size} \gets
    \textsc{EstimateSize}(n_{\mathrm{cand}})$

    \If{$\mathrm{size} \geq n_{\mathrm{in}}$}
        \If{$S=\text{'prime'} \land
        \textsc{IsPrime}(n_{\mathrm{cand}})$}
            \State \textbf{break} with
            $(n_{\mathrm{cand}},\mathrm{size})$
        \EndIf

        \If{$S=\text{'smallest'}$}
            \State \textbf{break} with
            $(n_{\mathrm{cand}},\mathrm{size})$
        \EndIf
    \EndIf
\EndFor

\State \textbf{Fallback:} \Return the cached result for $n=5$
if no candidate is found.

\end{algorithmic}
\end{algorithm}




\subsubsection{FiedlerRanking}

To establish the spectral mapping, we compute the Fiedler ranking of the nodes in both the input graph \(G_{in}\) and the auxiliary Schreier graph \(\Gamma\). We utilize an iterative power method to approximate the eigenvector corresponding to second smallest eigenvalue of the normalized Laplacian. Starting from a random vector orthogonalized against the trivial eigenvector \(\mathbf{1}\), we repeatedly apply the symmetric normalized adjacency operator and re-orthogonalize to ensure convergence. The resulting vector is a one-dimensional embedding that reflects the global connectivity and "bottleneck" structure of the graph. Nodes are then ranked by their values in this vector to determine the pairing for the constant-degree SCGR overlay 

\begin{algorithm}[h]
\caption{Spectral Node Ranking (\textsc{FiedlerRanking})}
\label{alg:fiedler}
\footnotesize
\begin{algorithmic}
\Require Graph $G$, number of nodes $n$, iterations $T$
\Ensure Sorted node indices (ranks)

\State $L_{sym} \gets$ compute symmetric normalized adjacency matrix $D^{-1/2}AD^{-1/2}$
\State $v_1 \gets \sqrt{\text{degrees}}$ \Comment{Trivial eigenvector component}
\State $v \gets \text{Random vector } \in \mathbb{R}^n$
\State $v \gets v - \text{proj}_{v_1}(v)$ \Comment{Ensure orthogonality to trivial eigenvector}

\For{$t \gets 1$ \textbf{to} $T$}
    \State $v \gets L_{sym} v$
    \State $v \gets v - \text{proj}_{v_1}(v)$ \Comment{Re-orthogonalize}
    \State $v \gets v / \|v\|$ \Comment{Normalize}
\EndFor

\State \Return $\textsc{Argsort}(v)$
\end{algorithmic}
\end{algorithm}

\subsubsection{LiftFeatures}

The Lift Features function establishes initial hidden states for auxiliary Schrier nodes. As the mapping \(\phi\) may be many-to-one depending on the ratio \(\frac{|V_{\Gamma}|)}{(n_{in}}\), the feature of the each auxiliary node is computed using the mean node embedding of all input nodes mapped to that specific coset. The average acts as a feature pool, ensuring communication layer is initialized with representation of the original graph. 

\begin{algorithm}[H]
\caption{Feature Lifting (\textsc{LiftFeatures})}
\label{alg:liftfeatures}
\footnotesize
\begin{algorithmic}
\Require Input features $X_{\mathrm{in}} \in \mathbb{R}^{n_{in} \times d}$, Spectral mapping $\phi$, Coset count $|V_\Gamma|$
\Ensure Auxiliary features $X_\Gamma \in \mathbb{R}^{|V_\Gamma| \times d}$

\State $X_\Gamma \gets \mathbf{0}^{|V_\Gamma| \times d}$ \Comment{Initialize auxiliary feature matrix}
\State $C \gets \mathbf{0}^{|V_\Gamma| \times 1}$ \Comment{Initialize node counts for averaging}

\For{$i \gets 1$ \textbf{to} $n_{in}$}
    \State $v \gets \phi(i)$ \Comment{Find the target coset node}
    \State $X_\Gamma[v] \gets X_\Gamma[v] + X_{\mathrm{in}}[i]$ \Comment{Aggregate features}
    \State $C[v] \gets C[v] + 1$ \Comment{Increment occupancy count}
\EndFor

\State $X_\Gamma \gets X_\Gamma / \max(C, 1)$ \Comment{Normalize by averaging features}
\State \Return $X_\Gamma$
\end{algorithmic}
\end{algorithm}

\subsubsection{SchreierTransform}

The SchreierTransform function is implemented as a whole method, encompassing group modulus \(n\) such that the resulting coset is at least as large as the input graph ensuring full coverage, fiedler ranking for both \(G_{in}\) and \(\Gamma\) which aligns nodes based on their global connectivity profiles. 

The final augmented graph is constructed by concatenating the original edges with the 4-regular Schreier edges and bi-directional coupling edges that link original nodes to their spectrally aligned counterparts in the coset space. Node features in auxiliary nodes are initialized by lifting and averaging the input features \(X_{in}\) through the mapping \(\phi\). 

The construction introduces highly connected bypasses that effectively reduce the graph's effective resistance (ER), facilitating long-range communication and mitigating over-squashing in bottleneck heavy grap structures. 

\begin{algorithm}[h]
\caption{Schreier-Coset Graph Rewiring (\textsc{SCGR})}
\label{alg:scgr_compact}
\footnotesize
\begin{algorithmic}
\Require Input graph $G_{in}(V, E, X)$, coupling strength $\epsilon$, mapping boolean \texttt{spectral}
\Ensure Augmented graph $G^{aug}(V', E', X')$

\State $n \gets \textsc{FindN}(|V|, \text{strategy})$ \Comment{Select optimal group modulus}
\State $\Gamma(V_\Gamma, E_\Gamma) \gets \textsc{BuildSchreierGraph}(n)$ \Comment{Construct expander overlay}

\If{\texttt{spectral}}
    \State $\phi \gets \textsc{SpectralAlign}(G_{in}, \Gamma)$ \Comment{Pair nodes via Fiedler rankings}
\Else
    \State $\phi(u) \gets u \pmod{|V_\Gamma|} \quad \forall u \in V$ \Comment{Uniform fallback mapping}
\EndIf

\State $X_\Gamma \gets \textsc{AverageFeatures}(X, \phi)$ \Comment{Initialize auxiliary node features}
\State $E_{cpl} \gets \{(u, \phi(u) + |V|), (\phi(u) + |V|, u) \mid u \in V\}$ \Comment{Define coupling edges}
\State \Return $G_{in} \cup \Gamma \cup E_{cpl}$ with weight $\epsilon$ for $E_{cpl}$
\end{algorithmic}
\end{algorithm}

\subsection{Experiments}

\subsubsection{Hyperparameters}
We initialize the SCGR-augmented GNNs using a \texttt{coupling\_strength} of \( \epsilon=1.0 \) and perform spectral alignment with 100 Fiedler iterations. The auxiliary group modulus \( n \) is selected dynamically via a \texttt{smallest\_prime} strategy, and all models are trained using the Adam optimizer with a consistent learning rate and weight decay.

\subsubsection{Effective Resistance: Benchmark Datasets}

\textbf{\textit{Table \ref{tab:er1}}} represents the ER values for the benchmark datasets. \textit{SCGR} on every dataset attains lower Effective Resistance illustrating better connectivity across graph nodes.

\begin{table*}[h]
\centering
\caption{Effective resistance on benchmark datasets. Percentages in parentheses indicate the reduction in effective resistance relative to the corresponding baseline model.}
\label{tab:er1}
\footnotesize
\renewcommand{\arraystretch}{1.25}
\setlength{\tabcolsep}{5pt}
\begin{tabular}{lccccc}
\toprule
\textbf{Model} 
& \textbf{MUTAG} 
& \textbf{PROTEINS} 
& \textbf{IMDB-BINARY} 
& \textbf{COLLAB} 
& \textbf{ENZYMES} \\
\midrule

GCN 
& $15562 \pm 4098$ 
& $13171 \pm 6650$ 
& $20220 \pm 4351$ 
& $5758 \pm 7127$ 
& $10475 \pm 5842$ \\

GIN 
& $19159 \pm 4698$ 
& $12609 \pm 1662$ 
& $22540 \pm 7967$ 
& $3566 \pm 1404$ 
& $13278 \pm 5662$ \\

\midrule
\midrule

\textbf{GCN + SCGR} 
& \shortstack{$10035 \pm 4339$\\
{\scriptsize $\boldsymbol{\downarrow}\,\mathbf{35.5\%}$}} 
& \shortstack{$12332 \pm 4662$\\
{\scriptsize $\boldsymbol{\downarrow}\,\mathbf{6.4\%}$}} 
& \shortstack{$13091 \pm 3324$\\
{\scriptsize $\boldsymbol{\downarrow}\,\mathbf{35.3\%}$}} 
& \shortstack{$1664 \pm 149$\\
{\scriptsize $\boldsymbol{\downarrow}\,\mathbf{71.1\%}$}} 
& \shortstack{$8845 \pm 748$\\
{\scriptsize $\boldsymbol{\downarrow}\,\mathbf{15.6\%}$}} \\

\midrule

\textbf{GIN + SCGR} 
& \shortstack{$15072 \pm 6654$\\
{\scriptsize $\boldsymbol{\downarrow}\,\mathbf{21.3\%}$}} 
& \shortstack{$11992 \pm 1384$\\
{\scriptsize $\boldsymbol{\downarrow}\,\mathbf{4.9\%}$}} 
& \shortstack{$14556 \pm 4448$\\
{\scriptsize $\boldsymbol{\downarrow}\,\mathbf{35.4\%}$}} 
& \shortstack{$2483 \pm 983$\\
{\scriptsize $\boldsymbol{\downarrow}\,\mathbf{30.4\%}$}} 
& \shortstack{$11116 \pm 3647$\\
{\scriptsize $\boldsymbol{\downarrow}\,\mathbf{16.3\%}$}} \\

\bottomrule
\bottomrule

\end{tabular}
\end{table*}

SCGR reduces effective resistance across all benchmark configurations. The reductions range from approximately 5\% to 71\%, with reductions of 15\% or more in eight of the ten model–dataset configurations. The improvement is comparatively modest on PROTEINS, where SCGR reduces effective resistance by 6.4\% for GCN and 4.9\% for GIN due  to its relatively simple graph topology and less severe connectivity bottlenecks.

The percentage reduction in effective resistance is computed relative to the corresponding baseline model. Specifically, $\mathrm{ER}{\text{baseline}}$ denotes the effective resistance obtained using the original GCN or GIN architecture, while $\mathrm{ER}{\text{SCGR}}$ denotes the value obtained after incorporating SCGR. A positive percentage therefore indicates that SCGR lowers effective resistance, with a larger percentage representing a greater improvement in graph connectivity.

\begin{equation}
\text{ER Reduction}~(\%)
=
\frac{
\mathrm{ER}_{\text{baseline}}
-
\mathrm{ER}_{\text{SCGR}}
}{
\mathrm{ER}_{\text{baseline}}
}
\times 100.
\label{eq:er_reduction}
\end{equation}


\subsubsection{Effective Resistance: Stochastic Block Models}

We evaluate \textit{SCGR} using Stochastic Block Models (SBM) with \(n=1,000\) nodes and \(K=50\) communities. By generating synthetic graphs with 1,000 nodes partitioned into 50 communities, we systematically vary the internal and external edge probabilities to simulate a wide range of graph modularities \(p_{in} : 0.1-0.4 \) and \(p_{out} : 10^{-3}-10^{-1.5}\) to simulate diverse modularity regimes. Each 2-layer GCN uses 64-dimensional random features, ReLU activation, and Adam optimization \(lr=0.01\), weight decay \(= 5\times10^{-4}\) over 80 epochs. \textit{SCGR} augmentation employs a d = 4 Schreier expander, coupling strength \(\epsilon=1.0\), and 100-iteration Fiedler mapping. This controlled setup isolates topological bottlenecks, demonstrating that SCGR's hierarchical bypasses consistently reduce effective resistance and recover accuracy.

We evaluate the structural impact of our expander overlay by computing the average Effective Resistance (ER) using the pseudo-inverse of the graph Laplacian, further validating the results through a Schur complement projection of the augmented Laplacian. 

The results, as shown in the 
\textbf{Table \ref{tab:sbm_results}} and \textbf{Figure \ref{fig:erm}} reveal that SCGR consistently provides alternative low-resistance pathways—bounded theoretically by \(2/(d\gamma)\). This defines significant accuracy gains in low-modularity regimes where the GCN baseline is otherwise severely limited by structural bottlenecks.

\begin{table}[h]
\centering
\caption{GCN accuracy and effective resistance (ER) on SBM graphs.}
\label{tab:sbm_results}
\begin{tabular}{lccc|ccc}
\toprule
\textbf{Mod.} & Base Acc & SCGR Acc & $\Delta_{Acc}$ & Base ER & SCGR ER &  $\Delta_{ER}$\\
\midrule
Low (0.25--0.40)   & 0.4779 & 0.5140 & +7.55\% & 0.582 & 0.341 & -41.4\% \\
Med (0.40--0.70)   & 0.8164 & 0.8219 & +0.79\% & 0.405 & 0.245 & -39.5\% \\
High (0.70--0.85)  & 0.9681 & 0.9731 & +0.20\% & 0.271 & 0.189 & -30.3\% \\
\bottomrule
\end{tabular}
\end{table}

\begin{figure}[ht] 
    \centering
    \includegraphics[width=0.8\linewidth]{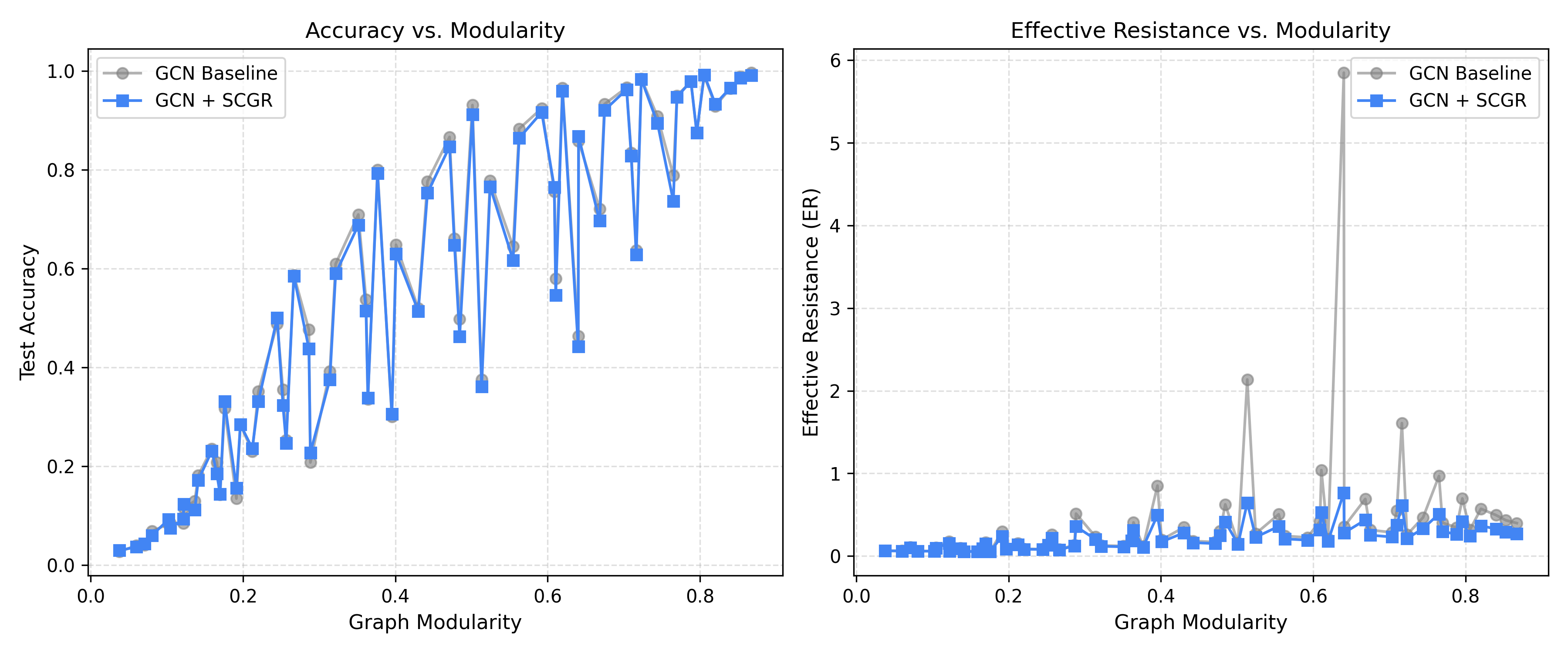}
    \caption{Comparison of effective resistance across different SBM modularities.}
    \label{fig:erm}
\end{figure}

\subsubsection{OGBG - Molhiv, OGBG - Molpcba}

For, molecular prediction task, \textit{SCGR} is assessed on the \textbf{\textit{Open Graph Benchmark - Graph Prediction}} \textit{Ogbg-Molhiv} and \textit{Ogbg-Molpcba} datasets \cite{hu2020open}. The experimental protocol adheres to the implementation and hyperparameter configuration specified by \cite{hu2020open} with \textit{layers: 5, hidden dimension: 300, a dropout: 0.5, and a batch size :64} 

\textit{SCGR} exhibits robust predictive performance while maintaining high structural fidelity. Schreier-coset in both configurations attains highest ROC-AUC score in Molhiv dataset. For the Molpcba dataset, \textit{GCN+SCGR} attains the highest average precision, with \textit{GIN+SCGR} remaining competitive based on the inherent scale and structural complexity of Molpcba.  For GIN on Molpcba, the standard deviation of 0.0767 likely reflects instability in the Fiedler mapping at Molpcba scale, where fixed-budget power iteration may not yield a reliable structural embedding across runs.

\begin{table}[h]
\centering
\caption{Performance comparison on OGBG-MOLHIV and OGBG-MOLPCBA.}
\label{tab:molhiv_ppa}
\renewcommand{\arraystretch}{0.5}
\setlength{\tabcolsep}{2pt}
\begin{tabular}{lcc}
\toprule
Model & OGBG-MOLHIV & OGBG-MOLPCBA \\
\cmidrule(lr){2-2} \cmidrule(lr){3-3}
& Test ROC-AUC $\uparrow$ & Test AP $\uparrow$ \\
\midrule
GCN & & \\
\quad Baseline & 0.7566 ± 0.0104 & 0.2020 ± 0.0024 \\
\quad + Master Node & 0.7531 ± 0.0128 & -- \\
\quad + FA & 0.7628 ± 0.0191 & -- \\
\quad + FLAG & -- & 0.2115 ± 0.0017 \\
\quad + EGP & 0.7731 ± 0.0081 & -- \\
\quad + CGP & 0.7794 ± 0.0122 & -- \\
\midrule
\textbf{\quad + SCGR} & \(\mathbf{0.7834 \pm 0.0141}\) & \(\mathbf{0.2998 \pm 0.0773}\) \\
\midrule
GIN & & \\
\quad Baseline & 0.7678 ± 0.0183 & 0.2266 ± 0.0028 \\
\quad + Master Node & 0.7608 ± 0.0134 & -- \\
\quad + FA & 0.7718 ± 0.0147 & -- \\
\quad + FLAG & -- & 0.2395 ± 0.0040 \\
\quad + EGP & 0.7537 ± 0.0076 & -- \\
\quad + CGP & 0.7899 ± 0.0090 & -- \\
\midrule
\textbf{\quad + SCGR} & \(\mathbf{0.8322 \pm 0.0322}\) & 0.1988 ± 0.07334 \\
\bottomrule
\end{tabular}
\end{table}

\subsubsection{LRGB - Peptides Struct. Peptides Func}

The \textit{Peptides-Struct and Peptides-Func} datasets, sourced from the \textbf{\textit{Long Range Graph Benchmark}} suite provides challenging molecular property prediction tasks that necessitate modeling long-range dependencies in graph structures. \textit{Peptides-Func} (Classification), is evaluated using Average Precision (AP). \textit{Peptides-Struct} (Regression) predicts functional properties of peptides, measured by the mean absolute error (MAE).

\begin{table*}[t]
  \centering
  \caption{Performance comparison on PEPTIDES-FUNC (Test AP $\uparrow$) and PEPTIDES-STRUCT (Test MAE $\downarrow$)}
  \label{tab:lrgb}
  \setlength{\tabcolsep}{12pt}
  \resizebox{0.85\textwidth}{!}{
    \scriptsize
    \begin{tabular}{lcc}
      \toprule
      \textbf{Model} & \textbf{PEPTIDES-FUNC (Test AP $\uparrow$)} & \textbf{PEPTIDES-STRUCT (Test MAE $\downarrow$)} \\
      \midrule
      GCN               & 0.5029 $\pm$ 0.0058 & 0.3587 $\pm$ 0.0006 \\
      \quad + SDRF      & 0.5041 $\pm$ 0.0026 & 0.3559 $\pm$ 0.0010 \\
      \quad + FoSR      & 0.4534 $\pm$ 0.0090 & 0.3003 $\pm$ 0.0007 \\
      \quad + EGP       & 0.4972 $\pm$ 0.0023 & 0.3001 $\pm$ 0.0013 \\
      \quad + CGP       & 0.5106 $\pm$ 0.0014 & 0.2931 $\pm$ 0.0006 \\
      \midrule
      \quad + \textbf{SCGR} & \(\mathbf{0.5301 \pm 0.0010}\) & \(\mathbf{0.2886 \pm 0.0010}\) \\
      \midrule
      GIN               & 0.5124 $\pm$ 0.0055 & 0.3544 $\pm$ 0.0014 \\
      \quad + SDRF      & 0.5122 $\pm$ 0.0061 & 0.3515 $\pm$ 0.0011 \\
      \quad + FoSR      & 0.4584 $\pm$ 0.0079 & 0.3008 $\pm$ 0.0014 \\
      \quad + EGP       & 0.4926 $\pm$ 0.0070 & 0.3034 $\pm$ 0.0027 \\
      \quad + CGP       & 0.5159 $\pm$ 0.0059 & 0.2910 $\pm$ 0.0011 \\
      \midrule
      \quad + \textbf{SCGR} & \(\mathbf{0.5849 \pm 0.0110}\) & \(\mathbf{0.2642 \pm 0.0020}\) \\
      \bottomrule
    \end{tabular}
  }
\end{table*}

\textit{\textbf{Table \ref{tab:lrgb}}} demonstrates SCGR accomplished highest scores in both parameters across both peptide prediction tasks. Schreier Coset delivered a substantial improvement over the strongest baseline, with particularly notable gains when combined with GIN \(:+13.4\%\) on \textit{Peptides-Func} and a \(-9.2\%\) error reduction on \textit{Peptides-Struct}. Even with GCN, \textit{SCGR} outperforms all competing rewiring methods. These consistent improvements across both architectures and tasks validate its effectiveness in enabling GNNs to capture the long-range molecular interactions critical for accurate peptide property prediction, where traditional message passing approaches struggle due to limited receptive fields over-squashing bottlenecks. 

\subsection{Ablation Studies}

\subsubsection{Node Classification}
We evaluate the performance of \(SL(2,\mathbb{Z}_n)\) against a random 4-regular baseline to test the necessity of algebraic symmetry. To isolate the impact of spectral alignment, we contrast Fiedler-based mapping with random node permutations. Finally, we conduct a sensitivity analysis on the coupling strength \(\epsilon \in [0.1,2.0]\).These are performed on Amazon Computers and Amazon Photo.

Results on both Amazon Computers and Amazon Photo yield significant improvements in accuracies. The "Random 4-regular" augmentation consistently performs the worst across both datasets, emphasizing that expansion alone is insufficient. The peak performance at \(\epsilon=1.0 \)suggests that for certain node-level tasks, the SCGR provides critical contextual information that local neighborhoods alone cannot capture.

\begin{table}[ht]
\centering
\caption{Ablation for Node Classification benchmark}
\label{tab:ablation_node_amazon_series}
\small
\begin{tabular}{l | c | c}
\toprule
\textbf{Configuration} & \textbf{Amazon Computers} & \textbf{Amazon Photo} \\
\midrule
GCN (Baseline)          & 0.9058 $\pm$ 0.0028 & 0.9120 $\pm$ 0.0120 \\
+ Random 4-reg          & 0.8987 $\pm$ 0.0070 & 0.8995 $\pm$ 0.4170 \\
+ SCGR (Random Map)     & 0.9028 $\pm$ 0.0050 & 0.9110 $\pm$ 0.3893 \\
\midrule
+ SCGR ($\epsilon=0.1$) & 0.9029 $\pm$ 0.0036 & 0.9227 $\pm$ 0.1040 \\
+ SCGR ($\epsilon=0.5$) & 0.9023 $\pm$ 0.0049 & 0.9310 $\pm$ 0.8830 \\
+ SCGR ($\epsilon=1.0$) & \textbf{0.9031 $\pm$ 0.0062} & \textbf{0.9400 $\pm$ 0.0026} \\
+ SCGR ($\epsilon=2.0$) & 0.9024 $\pm$ 0.0051 & 0.9329 $\pm$ 0.1870 \\
\bottomrule
\end{tabular}
\end{table}


\subsubsection{Graph Classification}
Using the same ablation metrics, we perform tests on Mutag, Enzymes, Proteins, Collab, Imdb domains to ensure the robustness of SCGR design.

The analysis depicts that SCGR \(\epsilon=1.0\) consistently optimizes the bottleneck-performance trade-off. On COLLAB, we achieve a massive \(+8.8\% \)accuracy boost and a \(71.1\%\) reduction in effective resistance. Across all datasets, the Random Map version frequently degrades accuracy, proving that spectral alignment is vital for useful signal propagation. While Random 4-regular expansion often yields lower absolute resistance, it consistently performs poorly in accuracy. Our findings demonstrate that \(\epsilon=1.0\), group theoretic \(SL(2,\mathbb{Z}_n)\) provide the ideal balance for global communication without washing out essential local properties.

\begin{table}[ht]
\centering
\caption{Ablation Study for Graph Classification Benchmark}
\label{tab:ablation_master}
\footnotesize
\setlength{\tabcolsep}{2pt}
\begin{tabular}{l | cc | cc | cc | cc | cc}
\toprule
& \multicolumn{2}{c|}{\textbf{MUTAG}} & \multicolumn{2}{c|}{\textbf{ENZYMES}} & \multicolumn{2}{c|}{\textbf{PROTEINS}} & \multicolumn{2}{c|}{\textbf{COLLAB}} & \multicolumn{2}{c}{\textbf{IMDB-B}} \\
\textbf{Configuration} & \textbf{Acc.} & \textbf{$R_{\text{eff}}$} & \textbf{Acc.} & \textbf{$R_{\text{eff}}$} & \textbf{Acc.} & \textbf{$R_{\text{eff}}$} & \textbf{Acc.} & \textbf{$R_{\text{eff}}$} & \textbf{Acc.} & \textbf{$R_{\text{eff}}$} \\
\midrule
GCN (Baseline)          & 73.160 & 15562 & 53.750 & 10475  & 69.200 & 13171 & 68.800 & 5758.7 & 63.400 & 20220 \\
+ Random 4-reg          & 72.630 & 7201  & 51.170 & 6120.8 & 69.200 & 7200.0 & 69.800 & 2161.4 & 58.800 & 10098 \\
+ SCGR (Random Map)     & 71.840 & 15751 & 53.920 & 11226  & 69.420 & 14887 & 71.000 & 2822.2 & 58.100 & 12959 \\
\midrule
+ SCGR ($\epsilon=0.1$) & 75.260 & 14703 & 52.420 & 8709.8 & 68.260 & 15340 & 74.400 & 3086.7 & 60.750 & 15472 \\
+ SCGR ($\epsilon=0.5$) & 75.530 & 13798 & 51.920 & 8837.2 & 70.800 & 15772 & 71.600 & 2657.2 & \textbf{62.050} & 13321 \\
+ SCGR ($\epsilon=1.0$) & \textbf{77.890} & \textbf{10035} & \textbf{53.830} & \textbf{8845.07} & \textbf{72.590} & \textbf{12332} & \textbf{77.600} & \textbf{1664.0} & 61.600 & \textbf{13091} \\
+ SCGR ($\epsilon=2.0$) & 73.420 & 15015 & 49.420 & 10581  & 68.660 & 14034 & 72.400 & 4453.6 & 57.300 & 13027 \\
\bottomrule
\end{tabular}
\end{table}

\end{document}